\documentclass[review,onefignum,onetabnum]{article}

\usepackage{amssymb,amsfonts,mathtools}
\usepackage{graphicx}
\usepackage{microtype}
\usepackage{setspace}
\usepackage{enumitem}
\usepackage[font=small,labelfont=bf]{caption}
\usepackage{booktabs} 
\usepackage{bm}
\setlist[itemize]{leftmargin=2em}
\setlist[enumerate]{leftmargin=2.5em}

\usepackage{amsmath}
\usepackage{amssymb}
\usepackage{amsfonts}
\usepackage{amsthm}
\usepackage{amsopn}
\usepackage{graphicx}
\usepackage{epstopdf}
\usepackage{algorithm}
\usepackage{algorithmic}
\usepackage{enumitem}
\usepackage{hyperref}
\usepackage[nameinlink,capitalise,noabbrev]{cleveref}
\usepackage{aliascnt}

\ifpdf
  \DeclareGraphicsExtensions{.eps,.pdf,.png,.jpg}
\else
  \DeclareGraphicsExtensions{.eps}
\fi

\setlist[enumerate]{leftmargin=.5in}
\setlist[itemize]{leftmargin=.5in}

\newtheorem{theorem}{Theorem}[section]

\newaliascnt{lemma}{theorem}
\newtheorem{lemma}[lemma]{Lemma}
\aliascntresetthe{lemma}

\newaliascnt{proposition}{theorem}
\newtheorem{proposition}[proposition]{Proposition}
\aliascntresetthe{proposition}

\newaliascnt{corollary}{theorem}
\newtheorem{corollary}[corollary]{Corollary}
\aliascntresetthe{corollary}

\newaliascnt{claim}{theorem}

\aliascntresetthe{claim}

\theoremstyle{definition}

\newaliascnt{definition}{theorem}
\newtheorem{definition}[definition]{Definition}
\aliascntresetthe{definition}

\newaliascnt{assumption}{theorem}
\newtheorem{assumption}[assumption]{Assumption}
\aliascntresetthe{assumption}

\newtheorem{example}{Example}[section]

\newaliascnt{hypothesis}{theorem}

\aliascntresetthe{hypothesis}

\theoremstyle{remark}

\newaliascnt{remark}{theorem}

\aliascntresetthe{remark}

\newaliascnt{fact}{theorem}

\aliascntresetthe{fact}

\crefname{section}{Section}{Sections}
\crefname{subsection}{Subsection}{Subsections}
\crefname{definition}{Definition}{Definitions}
\crefname{theorem}{Theorem}{Theorems}
\crefname{lemma}{Lemma}{Lemmas}
\crefname{proposition}{Proposition}{Propositions}
\crefname{corollary}{Corollary}{Corollaries}
\crefname{claim}{Claim}{Claims}
\crefname{assumption}{Assumption}{Assumptions}
\crefname{remark}{Remark}{Remarks}
\crefname{example}{Example}{Examples}
\crefname{hypothesis}{Hypothesis}{Hypotheses}
\crefname{fact}{Fact}{Facts}
\crefname{figure}{Figure}{Figures}
\crefname{table}{Table}{Tables}
\crefname{equation}{Equation}{Equations}

\title{On the Regularity and Generalization of One-Step Wasserstein-guided Generative Models for PDE-Induced Measures}

\author{
Likun Lin\thanks{Department of Mathematics, The University of Hong Kong, Pokfulam Road, Hong Kong SAR, P.R. China.
\texttt{likunlin3@connect.hku.hk}.}
\and
Zhongjian Wang\thanks{Division of Mathematical Sciences, School of Physical and Mathematical Sciences, Nanyang Technological University, Singapore 637371.
\texttt{zhongjian.wang@ntu.edu.sg}.}
\and
Jack Xin\thanks{Department of Mathematics, University of California at Irvine, Irvine, CA 92697, USA.
\texttt{jxin@math.uci.edu}.}
\and
Zhiwen Zhang\thanks{Corresponding author. Department of Mathematics, The University of Hong Kong, Pokfulam Road, Hong Kong SAR, P.R. China. Materials Innovation Institute for Life Sciences and Energy (MILES), HKU-SIRI, Shenzhen, 518045, P.R. China.
\texttt{zhangzw@hku.hk}.}
}

\begin{document}
\newcommand{\Hess}{\nabla^2}
\newcommand{\tr}{\operatorname{tr}}
\newcommand{\opnorm}[1]{\left\|#1\right\|_{\mathrm{op}}}
\newcommand{\lammax}{\lambda_{\max}}
\newcommand{\Prob}{\mathcal{P}}

\maketitle

\begin{abstract}
Despite the remarkable empirical success of generative models, the available theory on their statistical accuracy in scientific computing remains largely pessimistic. This paper develops a theoretical framework for understanding the regularity of transport maps and the generalization properties of one-step Wasserstein-guided generative models for PDE-induced probability measures. We consider normalized target densities associated with linear elliptic and parabolic equations on bounded domains, as well as diffusion and Fokker--Planck equations on the torus. Under standard structural assumptions, we prove that these target measures satisfy doubling conditions. By combining this fact with regularity theory for optimal transport between doubling measures, we show that the optimal transport map from a uniform source measure to the target measure is Hölder continuous. This regularity yields an approximation-theoretic justification for one-step generative models that learn PDE-induced distributions via a single pushforward map. As a representative instance, we study DeepParticle and derive excess-risk bounds characterizing the discrepancy between the learned map and the population-optimal map. We also establish a robustness estimate under target shift and illustrate the theory with experiments which support the derived rates.
\end{abstract}

\noindent\textbf{Keywords.}
PDE-induced probability measures, doubling measures, H\"older regularity,
neural-network approximation, excess-risk analysis

\medskip

\noindent\textbf{MSC codes.}
68T07, 35J25, 41A46, 35K20, 35Q84

\section{Introduction}

Deep neural networks as direct solvers or surrogate models have become an important computational paradigm across science and engineering. Representative examples include the Deep Ritz method for variational PDEs and eigenvalue problems \cite{weinan2018deepritz,wang2020meshfree,cui2023variational,chen2023solving,huang2025neural,lin2026biomimetic}, the deep Galerkin method for high-dimensional PDEs \cite{DGM}, and physics-informed neural networks (PINNs), which incorporate the governing equations directly into the loss function \cite{raissi2019pinn}. For parametric PDEs, DeepONet learns nonlinear solution operators from data \cite{lu2021deeponet}, while the Fourier Neural Operator learns mappings between function spaces and has proved effective for PDE families \cite{li2021fno,cheng2025podno,li2024pino}. These developments illustrate the rapidly growing interface between deep learning and PDE computation. 

The above paradigms are based on the Eulerian viewpoint of PDEs. By contrast, the Lagrangian approach \cite{wang2018computing,wang2021sharp,lyu2022kppfronts,zhang2025randomkpp,wu2025computing} has shown strong adaptivity in the small-diffusion regime. Under this viewpoint, deep generative models can be used to learn the map that pushes forward a simple reference distribution, such as a Gaussian or uniform measure, to a target distribution induced by the PDE solution or by the invariant measure of an interacting particle system. Representative examples include normalizing flows (NF) \cite{kobyzev2020flows}, generative adversarial networks (GANs) \cite{goodfellow2014gan}, diffusion models \cite{ho2020ddpm}, and DeepParticle (DP) \cite{wang2022deepparticle,wang2024deepparticleks}. Among them, the DP method learns target distributions by minimizing the Wasserstein distance using data generated from Lagrangian particle approaches with ample numerical  support, such as KPP front speeds 
(principal eigenvalue problem of reaction-diffusion-advection operators) and associated invariant-measure computations \cite{wang2022deepparticle}, and aggregation/focusing phenomena in Keller-Segel systems \cite{wang2024deepparticleks}. Recently, one-step diffusion models have been compared \cite{zhang2025bidirectionaldeepparticle} and integrated with DP \cite{shen2026twostepdiffusion} to achieve fast and generalizable predictions for solutions with large spatial gradients in the singular perturbation regime.

\medskip
Despite their different training mechanisms, these one-step models admit a common interpretation: they learn a parametric pushforward map transporting a simple reference measure to a target distribution while minimizing a Wasserstein-type discrepancy between the generated and target laws. In DeepParticle, the minimization objective is formulated in the primal Wasserstein form, while in WGAN-type models \cite{arjovsky2017wgan}, the Wasserstein distance is optimized through its dual formulation. From this perspective, the theory of one-step Wasserstein-guided generative modeling centers on two questions: whether the population transport map is regular enough to be represented and approximated by neural networks, and whether a map learned from empirical data generalizes to the population level.

The regularity issue is crucial because finite feedforward networks can represent only continuous functions, whereas pushforward maps need not be continuous. As noted in \cite{makkuva2020icnntransport}, approximating a discontinuous transport map by a continuous network can create spurious mass rather than correctly transporting the reference law to the target measure. Moreover, since the learned map \(\mathcal{T}_\theta\) is obtained from sampled measures, its population-level performance must also be controlled. A natural criterion is the population excess risk \(R(\mathcal{T}_\theta;\mu,\nu)-R(\mathcal{T};\mu,\nu)\), defined later in \eqref{eq:population-excess-risk}, where \(\mathcal{T}\) denotes the population-optimal transport map \cite{mohri2012foundations}; this quantity captures both the generalization and optimality gap between the learned map and the population-optimal one. 

Motivated by out-of-distribution (OOD) generalization, we further consider a
target-shift setting in which a map learned from the source PDE-induced target
law \(\nu\) is evaluated under an unseen target law \(\nu_1\), while the
reference law \(\mu\) remains fixed. In contrast to the population excess risk
for the source target distribution, this criterion measures robustness under target shift
through the OOD gap
\(R(\mathcal{T}_\theta;\mu,\nu_1)-R(\mathcal{T};\mu,\nu)\). This setting is natural in scientific computing, where
perturbing physical parameters, such as the diffusion coefficient, can change
the PDE-induced target distribution while the generator reference law remains
fixed.

The existing theory still does not resolve these two issues simultaneously, especially in the PDE-induced background. For multi-step generative models, such as diffusion models \cite{ho2020ddpm}, sampling is defined through a stochastic evolution rather than a single explicit pushforward map, which makes both direct approximation analysis and population-level generalization analysis substantially more difficult and indirect \cite{wang2024wasserstein,de2022convergence}. For one-step generative models, existing guarantees mainly fall into the statistical convergence under H\"older-type integral probability metrics (IPMs) \cite{chakraborty2025generative,stephanovitch2024wassersteingan}, results formulated directly in Wasserstein-1 distance \cite{chen2022distributiongan,liang2021gan}, or neural-IPM-based analyses of Wasserstein GANs \cite{biau2021wgan}. While these works provide important statistical guarantees, they are mostly developed under Wasserstein-1 or related IPMs, rather than the Wasserstein-2 criterion considered here, or rely on abstract generator--discriminator complexity arguments. As a result, they do not directly provide a practical framework that simultaneously explains why the relevant transport map should be learnable and why the learned map should generalize well at the population level. By contrast, PDE-induced measures carry additional structure inherited from the underlying equations, which makes both the regularity of the associated transport maps and the resulting excess-risk behavior more explicit and verifiable.

\medskip
This paper develops a theoretical framework for one-step Wasserstein-guided generative modeling with PDE-induced target measures. We first prove the existence of a regular optimal transport map at the population level, providing a rigorous basis for learning the underlying transport. In contrast to regularity results that typically require either an unbounded domain \cite{meng2025pathway,kim2012caffarelli} or target densities bounded away from zero \cite{caffarelli1992regularity}, our analysis relies on a doubling measure condition \cite{jhaveri2022regularity}, which allows the target density to be degenerate on bounded domains while still yielding H\"older regularity of the optimal transport map. 

To make this result directly verifiable in concrete PDE settings, we formulate explicit assumptions on the coefficients, boundary data, and initial conditions rather than postulating regularity of the solution a priori. This regularity also enables approximation of the transport map by neural networks, including sigmoid networks and convolutional neural networks \cite{yang2025cnn,langer2021sigmoid}. For concreteness, and in view of their widespread use in practical generative models such as WGAN-type generators, we focus on ReLU networks in the subsequent analysis. The approximation theory for ReLU networks developed in \cite{shen2020neurons} implies that, when the network class is sufficiently expressive, the resulting approximation error can be made sufficiently small.

Furthermore, we study DeepParticle as a representative one-step Wasserstein-guided generative model and carry out an excess-risk analysis that links the population problem to its empirical implementation. By decomposing the excess risk into generalization, discretization, approximation, and optimization errors, we derive an excess-risk bound and corresponding convergence rates in Wasserstein-2 distance. In particular, for sufficiently expressive ReLU networks with sufficiently small optimization error, the expected excess risk decays with the sample size \(N\) at the rates \(N^{-1/4}\) for \(d<4\), \(N^{-1/4}\sqrt{\log(1+N)}\) for \(d=4\), and \(N^{-1/d}\) for \(d>4\), where \(d\) is the spatial dimension. This predicted convergence rate is also confirmed by numerical experiments, for which comparable validations remain limited in the literature on one-step Wasserstein-guided generative models.

The rest of the paper is organized as follows. In \cref{sec:preliminaries}, we introduce the transport-learning formulation and then specialize it to DeepParticle. We also review the PDE settings covering bounded-domain elliptic and parabolic problems, as well as Fokker--Planck equations on the torus, and formulate them via easily verifiable assumptions. In \cref{sec:main-results}, we present our main regularity results for optimal transport maps associated with these PDE-induced target measures. In \cref{sec:proof-main}, we prove these regularity results in the elliptic, parabolic, and torus settings and establish the corresponding generalization properties. Based on the regularity theory, we turn to DP and derive an excess-risk bound. Then, in \cref{sec:numerical}, we present numerical experiments that verify the predicted convergence behavior. Finally, we conclude in \cref{sec:conclusion}.

\section{Preliminaries}\label{sec:preliminaries}
In this section, we introduce the transport-learning formulation used throughout the paper and the PDE settings that generate the target measures of interest. We first describe the population and empirical Wasserstein risks for one-step transport learning and then specialize the discussion to DeepParticle. We next review the elliptic, parabolic, and torus models from which the target measures are constructed.

\subsection{Transport-learning formulation and DeepParticle}

Let $X,Y\subset \mathbb{R}^d$ be measurable sets, and let $\mu\in\mathcal{P}_2(X)$ and $\nu\in\mathcal{P}_2(Y)$ denote the \emph{source} and \emph{target} probability measures, respectively. For any measurable map $S\colon X\to Y$, we write $S_{\#}\mu$ for the pushforward of $\mu$, defined by $(S_{\#}\mu)(A)=\mu\bigl(S^{-1}(A)\bigr)$ for every measurable $A\subset Y$. In one-step generative modeling, the goal is to learn a single map that pushes forward a simple reference measure $\mu$ to the target measure $\nu$. In the PDE-induced settings studied below, $\mu$ will typically be a uniform measure on a bounded reference domain, while $\nu$ will be obtained by normalizing a PDE solution.

To avoid notational ambiguity later, we reserve $\mathcal{T}$ for the population optimal transport map when it exists, and use $S$ or $\mathcal{T}_\theta$ for learned maps. For any candidate map $S$, we define the population risk by 
\begin{equation}\label{eq:R}
R(S;\mu,\nu) \triangleq W_2\!\left(S_{\#}\mu,\nu\right).
\end{equation}
When only samples are available, let $\{x_i\}_{i=1}^N\sim\mu$ and $\{y_i\}_{i=1}^N\sim\nu$ be i.i.d. samples, and form the empirical measures $\hat{\mu}_N \triangleq \frac{1}{N}\sum_{i=1}^N \delta_{x_i}$ and $\hat{\nu}_N \triangleq \frac{1}{N}\sum_{i=1}^N \delta_{y_i}$. The empirical risk is then $R(S;\hat{\mu}_N,\hat{\nu}_N) \triangleq W_2\!\left(S_{\#}\hat{\mu}_N,\hat{\nu}_N\right)$, which is evaluated either by linear
programming or approximately by the iterative divide-and-conquer algorithm \cite{wang2022deepparticle}. Given a neural-network hypothesis class $\mathcal{H}$, DeepParticle returns a map $\mathcal{T}_\theta\in\mathcal{H}$ that approximately minimizes the empirical risk,
\begin{equation*}
    \mathcal{T}_\theta \approx \arg\min_{S\in\mathcal{H}} R(S;\hat{\mu}_N,\hat{\nu}_N).
\end{equation*}
Thus, DeepParticle is a representative one-step Wasserstein-guided generative model, which proposes to compute $W_2$ in the primal space.


\subsection{Bounded-domain elliptic and parabolic settings}
We now introduce the bounded-domain PDE settings that may arise from various applications of scientific computing and serve as the target measures. We first fix the elliptic operator shared by the elliptic and parabolic problems and then discuss the corresponding normalization procedures.

Let $Y\subset\mathbb{R}^d$ be a bounded $C^{2,\alpha}$ domain for some $\alpha\in(0,1)$, and let $a^{ij}, b^i, c\in C^{0,\alpha}(\overline{Y})$ with $a^{ij}=a^{ji}$. We consider the second-order operator in nondivergence form
\begin{equation}\label{eq:operator}
\mathcal{L}u(y)
= -\sum_{i,j=1}^d a^{ij}(y)\,\partial_{ij}u(y)
  + \sum_{i=1}^d b^i(y)\,\partial_i u(y)
  + c(y)u(y).
\end{equation}
We assume uniform ellipticity for this operator: there exist constants $0<\lambda\le \Lambda$ such that
\[
\lambda |\xi|^2 \le \sum_{i,j=1}^d a^{ij}(y)\xi_i\xi_j \le \Lambda |\xi|^2,
\qquad \forall\,\xi\in\mathbb{R}^d,\ \forall\,y\in\overline{Y}.
\]
Throughout the bounded-domain discussion, we also write $d(y):=\operatorname{dist}(y,\partial Y)$.

These are standard assumptions in elliptic and parabolic Schauder theory and will be used repeatedly below.

We first consider the second-order linear elliptic PDE on the bounded domain. 
Given forcing $g$ in $Y$ and boundary data $h$ on $\partial Y$, consider the Dirichlet problem
\begin{equation}\label{eq:elliptic_pde}
\begin{cases}
\mathcal{L}u = g & \text{in } Y,\\
u = h & \text{on } \partial Y.
\end{cases}
\end{equation}

\begin{assumption}\label{ass:elliptic_data}
Assume $g\in C^{\alpha}(\overline{Y})$, $h\in C^{2,\alpha}(\partial Y)$, $c\ge 0$ in $\overline{Y}$, $g\ge 0$ in $\overline{Y}$, and $h\geq 0$ on $\partial Y$. In addition, $g$ and $h$ are not both identically zero. We will distinguish the cases $h\equiv 0$ on $\partial Y$ or $h>0$ on $\partial Y$.

\end{assumption}

\begin{proposition}[Existence and nonnegativity of the solution]\label{prop:elliptic_reg_pos}
Under \cref{ass:elliptic_data}, problem~\eqref{eq:elliptic_pde} admits a solution $u\in C^{2,\alpha}(\overline{Y})$. In particular, $u\in C^2(Y)\cap C(\overline{Y})$, and $u\ge 0$ on $\overline{Y}$.
\end{proposition}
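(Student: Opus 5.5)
Our plan is to obtain existence from global Schauder theory and nonnegativity from the weak maximum principle.

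\emph{Existence.} For existence we reduce to homogeneous boundary data. Since $Y$ is a $C^{2,\alpha}$ domain and $h\in C^{2,\alpha}(\partial Y)$, we extend $h$ to a function $\tilde h\in C^{2,\alpha}(\overline{Y})$; this is the standard extension through local boundary flattenings and a partition of unity. Writing $u=w+\tilde h$, the function $w$ must solve $\mathcal{L}w=g-\mathcal{L}\tilde h$ in $Y$ with $w=0$ on $\partial Y$. The right-hand side lies in $C^{\alpha}(\overline{Y})$ because the coefficients lie in $C^{0,\alpha}(\overline{Y})$.

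The operator $-\mathcal{L}$ is uniformly elliptic and has zeroth-order coefficient $-c\le 0$. This is exactly the sign condition in the classical existence theorem for the Dirichlet problem (Gilbarg--Trudinger, Theorem 6.14). That theorem gives a unique $w\in C^{2,\alpha}(\overline{Y})$, and hence $u\in C^{2,\alpha}(\overline{Y})$. The inclusion $u\in C^2(Y)\cap C(\overline{Y})$ is then immediate.

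Theorem 6.14 is proved by the method of continuity. It combines the global Schauder estimate with a uniform a priori bound $\sup|w|$, which comes from the maximum principle and uses $c\ge 0$. We will simply cite it rather than reprove it.

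\emph{Nonnegativity.} Suppose $\min_{\overline{Y}} u<0$. Let $\Omega^-=\{y\in Y: u(y)<0\}$, which is open and nonempty. On $\Omega^-$ we have $c\,u\le 0$, so
\[
-\sum_{i,j} a^{ij}\partial_{ij}u+\sum_i b^i\partial_i u = g-cu\ge 0 .
\]
Thus $u$ is a supersolution of an operator with no zeroth-order term. By the weak maximum principle, $\min_{\overline{\Omega^-}}u=\min_{\partial\Omega^-}u$.

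On $\partial\Omega^-$, each point lies either on $\partial Y$, where $u=h\ge 0$, or in $Y$, where $u=0$ by continuity. Hence $u\ge 0$ on $\partial\Omega^-$, which contradicts $u<0$ in $\Omega^-$.

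Equivalently, one can apply the weak maximum principle for $c\ge0$ directly (Gilbarg--Trudinger, Corollary 3.2). It gives $\inf_Y u\ge -\sup_{\partial Y}u^-=0$.

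\emph{Main obstacle.} We expect the only delicate point to be bookkeeping about which sign convention and which hypotheses are needed for Theorem 6.14. Three of them should be checked explicitly. First, the boundedness of the $C^{\alpha}$ coefficients on the bounded domain. Second, the $C^{2,\alpha}$ regularity of $\partial Y$. Third, $c\ge0$, which in the paper's sign convention corresponds to $c\le0$ in Gilbarg--Trudinger's.

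Uniqueness follows from the same maximum principle. It is not required for the statement, but it makes the later normalization $u/\int_Y u$ well defined. Positivity of the integral will use that $g$ and $h$ are not both identically zero, though that is beyond the present statement.
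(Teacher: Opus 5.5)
Your proposal is correct and follows the same route as the paper: global Schauder theory (Gilbarg--Trudinger) gives a solution in $C^{2,\alpha}(\overline{Y})$, and the weak maximum principle with $c\ge 0$, $g\ge 0$, $h\ge 0$ gives $u\ge 0$. You also supply details the paper leaves implicit, namely extending $h$ from $\partial Y$ to $\overline{Y}$, translating the sign convention for $c$, and writing out the maximum-principle argument, and these are all sound.
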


\begin{proof}
Classical Schauder theory yields $u\in C^{2,\alpha}(\overline{Y})$; see, for example, \cite{gilbarg1983elliptic}. Since $c\ge 0$, $g\ge 0$, and $h\ge 0$, the maximum principle implies $u\ge 0$ on $\overline{Y}$.

Moreover, $u\not\equiv 0$. Indeed, if $u\equiv 0$, then \eqref{eq:elliptic_pde} would force $g\equiv 0$ in $Y$ and $h\equiv 0$ on $\partial Y$, contradicting \cref{ass:elliptic_data}. In particular, when $h\equiv 0$ on $\partial Y$, one has $g\not\equiv 0$, and the strong maximum principle yields 
$
u>0 \text{ in } Y.
$
\end{proof}

By \cref{prop:elliptic_reg_pos}, we have $u\ge 0$ on $\overline{Y}$. Since $u\not\equiv 0$, it follows that $\int_Y u(y)\,dy>0$; since $Y$ is bounded and $u\in C(\overline{Y})$, we also have $\int_Y u(y)\,dy<\infty$. We then define the normalized density:
 $$\tilde{u}(y) := \frac{u(y)}{\int_Y u(z)\,dz}$$
 In the elliptic setting, the target measure $\nu$ is the probability measure with density $\tilde{u}$.

Having introduced the elliptic setting, we now turn to its time-dependent counterpart. In the parabolic case, the target measure is obtained from the solution at a fixed terminal time $T>0$.

Let $Q_T := Y\times(0,T)$ and $\Gamma_T := (\partial Y\times[0,T])\cup(\overline{Y}\times\{0\})$. Consider the parabolic problem
\begin{equation}\label{eq:parabolic_pde}
\begin{cases}
\partial_t u + \mathcal{L}u = g & \text{in } Q_T,\\
u = h & \text{on } \Gamma_T,
\end{cases}
\end{equation}
where $\mathcal{L}$ is as in \cref{eq:operator}.

\begin{assumption}\label{ass:parabolic_data}
Assume $g\in C^{\alpha,\alpha/2}(\overline{Q_T})$, $h\in C^{2+\alpha,1+\alpha/2}(\Gamma_T)$, and the standard compatibility conditions on $\Gamma_T$. Moreover, $c\ge 0$ in $\overline Y$, $g\ge 0$ in $\overline{Q_T}$, and $h\ge 0$ on $\Gamma_T$, and $g$ and $h$ are not both identically zero. We will distinguish the cases $u(\cdot,T)=0$ on $\partial Y$ and $u(\cdot,T)>0$ on $\partial Y$.
\end{assumption}

\begin{proposition}[Regularity and nonnegativity]\label{prop:parabolic_reg_pos}
Under \cref{ass:parabolic_data}, problem~\eqref{eq:parabolic_pde} admits a solution $u\in C^{2+\alpha,1+\alpha/2}(\overline{Q_T})$. In particular, $u\in C^{2,1}(Q_T)\cap C(\overline{Q_T})$, and $u\ge 0$ on $\overline{Q_T}$.
\end{proposition}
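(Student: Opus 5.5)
The plan mirrors the proof of \cref{prop:elliptic_reg_pos}, with elliptic Schauder theory replaced by its parabolic counterpart and the elliptic maximum principle replaced by the parabolic one.

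\textbf{Existence and regularity.} I would invoke linear parabolic Schauder theory for the first initial--boundary value problem; see, e.g., Ladyzhenskaya--Solonnikov--Ural'tseva or Lieberman. The operator $\partial_t+\mathcal{L}$ is uniformly parabolic, since the $a^{ij}$ are uniformly elliptic. Its coefficients lie in $C^{\alpha}(\overline{Y})$, and hence, being time-independent, in $C^{\alpha,\alpha/2}(\overline{Q_T})$. The domain $Y$ is $C^{2,\alpha}$, the source satisfies $g\in C^{\alpha,\alpha/2}(\overline{Q_T})$, and the data satisfy $h\in C^{2+\alpha,1+\alpha/2}(\Gamma_T)$. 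Under these conditions the theory yields a unique solution $u\in C^{2+\alpha,1+\alpha/2}(\overline{Q_T})$.

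The only delicate point is the corner $\partial Y\times\{0\}$. There one needs the compatibility conditions up to first order: $h$ must be consistent on the two pieces of $\Gamma_T$, and
\[
\partial_t h+\mathcal{L}h=g \quad\text{on } \partial Y\times\{0\}.
\]
These are exactly the ``standard compatibility conditions'' assumed in \cref{ass:parabolic_data}. I expect this step to be the only real obstacle, and it is handled by citing the theorem precisely rather than proving anything new. The inclusion $u\in C^{2,1}(Q_T)\cap C(\overline{Q_T})$ is then immediate from the Hölder space definition.

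\textbf{Nonnegativity.} I would apply the weak maximum principle for $\partial_t+\mathcal{L}$ with $c\ge 0$. Let $w=-u$. Then $(\partial_t+\mathcal{L})w=-g\le 0$ in $Q_T$, and $w=-h\le 0$ on the parabolic boundary $\Gamma_T$.

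If $w$ had a positive maximum, it would be attained at some $(y_0,t_0)$ with $y_0\in Y$ and $t_0\in(0,T]$. At such a point $\partial_t w\ge 0$, $\nabla w=0$, $D^2w\le 0$ and $cw\ge 0$. These give $(\partial_t+\mathcal{L})w\ge 0$, which fails to contradict $-g\le 0$ when $g$ vanishes there. I would therefore use the standard perturbation $w_\varepsilon=w-\varepsilon t$, or equivalently multiply by $e^{-t}$, to obtain a strict inequality. The inequality at $t_0=T$ holds since $u\in C^{2,1}$ up to $t=T$. This gives $\max_{\overline{Q_T}} w\le \max(0,\max_{\Gamma_T} w)=0$, i.e.\ $u\ge 0$ on $\overline{Q_T}$.

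As in the elliptic case, I would also remark that $u\not\equiv 0$, since otherwise $g\equiv 0$ and $h\equiv 0$. This is what will later allow normalization of $u(\cdot,T)$, although strictly it is not part of the statement.
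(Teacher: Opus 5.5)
Your proposal is correct and follows the same route as the paper. The paper cites classical parabolic Schauder theory (Lieberman) for existence and $C^{2+\alpha,1+\alpha/2}(\overline{Q_T})$ regularity, and then invokes the parabolic maximum principle to get $u\ge 0$. Your extra details correctly fill in what the paper leaves implicit: the first-order compatibility condition at the corner $\partial Y\times\{0\}$, and the $w-\varepsilon t$ (or $e^{-t}$) perturbation that, using $c\ge 0$, makes the interior-maximum contradiction strict.
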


\begin{proof}
Classical parabolic theory yields $u\in C^{2+\alpha,1+\alpha/2}(\overline{Q_T})$; see, for example, \cite{lieberman1996parabolic}. Since $g\ge 0$ in $Q_T$ and $h\ge 0$ on $\Gamma_T$, the parabolic maximum principle implies $u\ge 0$ on $\overline{Q_T}$.
\end{proof}

By \cref{prop:parabolic_reg_pos}, we have $u\ge 0$ on $\overline{Q_T}$. Moreover, $u$ is not identically zero: otherwise $u\equiv 0$ would force $g\equiv 0$ on $Q_T$ and $h\equiv 0$ on $\Gamma_T$, contrary to \cref{ass:parabolic_data}, so it follows that $\int_Y u(y,T)\,dy>0$. Since $Y$ is bounded and $u\in C(\overline{Y})$, we also have $\int_Y u(y,T)\,dy<\infty.$
Hence we define the terminal-time density:
$$
\tilde{u}(y,T) := \frac{u(y,T)}{\int_Y u(z,T)\,dz}.
$$
In the parabolic setting, the target measure $\nu$ is the probability measure with density $\tilde{u}(\cdot,T)$.

\subsection{Fokker--Planck on the torus}\label{sec:fp_torus}

We next consider diffusion on the torus, which provides a periodic setting without boundary effects. 

Let \(\mathbb{T}^d:=\mathbb{R}^d/(L\mathbb{Z})^d\), identified with the
periodic cell \([0,L]^d\), and equip it with normalized Lebesgue measure.
Let \(\bm{b}\in C^{\infty}(\mathbb{T}^d;\mathbb{R}^d)\) and
\(\kappa\in C^{\infty}(\mathbb{T}^d)\) be space-periodic coefficients, and
assume that there exist constants \(0<\kappa_-\le \kappa_+<\infty\) such that $ \kappa_- \le \kappa(x)\le \kappa_+$. Consider the diffusion process \((X_t)_{t\ge0}\) on
\(\mathbb{T}^d\) defined by
\begin{equation}\label{eq:sde_torus}
dX_t = \bm{b}(X_t)\,dt + \sqrt{2\kappa(X_t)}\,dW_t,
\qquad X_0\sim \rho_0,
\end{equation}
where the initial density \(\rho_0\in L^1(\mathbb{T}^d)\) satisfies \(\rho_0\ge 0\), and $\int_{\mathbb{T}^d}\rho_0(x)\,dx = 1.$

Let \(u(x,t)\) denote the probability density of \(X_t\) with respect to
normalized Lebesgue measure. Then \(u\) satisfies the Fokker--Planck equation
\begin{equation}\label{eq:FP}
\begin{cases}
\partial_t u = \mathcal{L}^*u, & (x,t)\in\mathbb{T}^d\times(0,\infty),\\
u(x,0)=\rho_0(x), & x\in\mathbb{T}^d,
\end{cases}
\end{equation}
where the formal adjoint is $\mathcal{L}^*u=\Delta\bigl(\kappa(x)u\bigr)-\nabla\cdot\bigl(\bm{b}(x)u\bigr).$

\begin{proposition}[Smoothness, mass conservation, and positivity on the torus]\label{prop:fp_smooth}
The solution \(u\) of \eqref{eq:FP} belongs to
\(C^{\infty}\bigl((0,\infty)\times\mathbb{T}^d\bigr)\), satisfies
\(u(t,\cdot)\ge 0\), and obeys $\int_{\mathbb{T}^d}u(t,x)\,dx = 1$ for all \(t\ge 0\).
\end{proposition}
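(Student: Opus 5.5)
We will treat \eqref{eq:FP} as a linear parabolic equation with smooth periodic coefficients on a compact manifold without boundary. Expanding the adjoint gives
\[
\mathcal{L}^*u=\kappa\Delta u+(2\nabla\kappa-\bm{b})\cdot\nabla u+(\Delta\kappa-\nabla\cdot\bm{b})u .
\]
This is uniformly parabolic with ellipticity constant $\kappa_->0$, and all coefficients are $C^\infty$ on $\mathbb{T}^d$. The plan has three steps: construct the solution and prove interior smoothness, then prove mass conservation, then prove nonnegativity.

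\textbf{Existence and smoothness.} We would interpret $u$ as a weak (distributional) solution with $u(t)\to\rho_0$ in $L^1$ as $t\to0^+$. Existence and uniqueness of such a solution would come from the fundamental solution $\Gamma(x,t;y,s)$ of the uniformly parabolic operator $\partial_t-\mathcal{L}^*$ on the torus. Levi's parametrix method applies because the torus is compact and has no boundary, so we can set $u(x,t)=\int_{\mathbb{T}^d}\Gamma(x,t;y,0)\rho_0(y)\,dy$. The kernel $\Gamma$ is $C^\infty$ for $t>s$, and its derivatives obey Gaussian bounds. We can therefore differentiate under the integral for $t>0$, which gives $u\in C^\infty((0,\infty)\times\mathbb{T}^d)$.

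An alternative route to smoothness is a bootstrap. First, $L^1\to L^\infty$ smoothing for $t\ge\varepsilon$ follows from the heat kernel bound. Then interior Schauder estimates, repeated on the cylinders $\mathbb{T}^d\times[\varepsilon,\infty)$, raise the regularity one order at a time. Since the coefficients are smooth, this yields $C^{k,\alpha}$ regularity for every $k$.

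\textbf{Mass conservation.} For $t>0$ we integrate the equation over $\mathbb{T}^d$:
\[
\frac{d}{dt}\int u\,dx=\int\bigl[\Delta(\kappa u)-\nabla\cdot(\bm{b}u)\bigr]dx=0 .
\]
Both terms vanish by periodicity, since they are divergences of smooth periodic fields. Hence $\int u(t)\,dx$ is constant on $(0,\infty)$. Letting $t\to0^+$ and using $L^1$ continuity at the initial time gives $\int u(t)\,dx=\int\rho_0\,dx=1$. At $t=0$ the identity holds directly from the normalization of $\rho_0$.

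\textbf{Nonnegativity.} This is the step that requires the most care, because $\rho_0$ is only $L^1$ and the zeroth-order coefficient $\Delta\kappa-\nabla\cdot\bm{b}$ may have either sign. The first tool to use is the positivity of the kernel, $\Gamma\ge0$, which gives $u\ge0$ immediately from $\rho_0\ge0$.

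To prove $\Gamma\ge0$, we would first mollify the data, taking $\rho_0^\varepsilon\in C^\infty$ with $\rho_0^\varepsilon\ge0$. Next set $w=e^{-Mt}u^\varepsilon$, where $M\ge\sup|\Delta\kappa-\nabla\cdot\bm{b}|$. Then $w$ satisfies an equation whose zeroth-order coefficient has a favorable sign. The weak maximum principle on the compact torus, where no boundary terms arise, then gives $w\ge0$ and hence $u^\varepsilon\ge0$. Finally, $L^1$-stability lets us pass to the limit: the $L^1$ contraction, or equivalently the kernel representation, shows $u^\varepsilon(t)\to u(t)$ in $L^1$. Nonnegativity is preserved in this limit.

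As a probabilistic cross-check, $u(t)$ is the law of $X_t$ from \eqref{eq:sde_torus}. The SDE is well posed because its coefficients are smooth and the diffusion is nondegenerate, so this density is automatically nonnegative with total mass one. Uniqueness of weak solutions identifies this density with the PDE solution $u$.
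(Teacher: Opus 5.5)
Your proof is correct, but it takes a different route from the paper's. The paper's proof is two sentences. Smoothness on $(0,\infty)\times\mathbb{T}^d$ is attributed to standard parabolic regularity for a uniformly parabolic equation with smooth coefficients. Nonnegativity and unit mass are read off from the fact that $u(t,\cdot)$ was \emph{defined} as the probability density of $X_t$, so they are automatic.

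You instead stay on the PDE side throughout:
\begin{itemize}
\item A fundamental solution built by the parametrix method gives existence, uniqueness, and $C^\infty$ smoothness for $t>0$.
\item Mass conservation comes from integrating the divergence-form right-hand side over the torus and letting $t\to0^+$.
\item Nonnegativity comes from the weak maximum principle applied to $w=e^{-Mt}u^\varepsilon$, which handles the sign-indefinite zeroth-order coefficient $\Delta\kappa-\nabla\cdot\bm b$. You then mollify $\rho_0$ and pass to the limit in $L^1$ using the kernel bounds.
\end{itemize}

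Your route is longer but self-contained and independent of the stochastic model, so it works for any nonnegative $L^1$ datum. It also makes explicit the uniqueness of weak solutions that the paper uses silently when it identifies the law of $X_t$ with ``the solution'' of the Fokker--Planck equation. Your probabilistic cross-check is exactly the paper's argument with that identification supplied. The paper's route buys brevity, since positivity and mass conservation come for free from the probabilistic definition.

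Two minor presentational points, neither affecting your main argument. First, the mollification step proves $u\ge0$ directly, so positivity of $\Gamma$ is a consequence rather than a prerequisite. Second, in your alternative bootstrap, passing from $L^\infty$ to a H\"older bound needs a De Giorgi--Nash--Moser or $L^p$-theory step before Schauder estimates can be iterated.
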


\begin{proof}
Since \eqref{eq:FP} is uniformly parabolic with smooth coefficients, standard
parabolic regularity implies that \(u\in C^\infty((0,\infty)\times\mathbb T^d)\). Moreover, since \(u(t,\cdot)\) is the probability density of \(X_t\), it is
nonnegative and has total mass one.
\end{proof}

Since the coefficients are time-independent, the long-time state is a
time-independent invariant density. Namely, there exists a unique smooth positive
density \(m(x)\) such that the solution of the Fokker--Planck equation converges to this stationary state as
\begin{equation}\label{eq:stationary-invariant-density}
0=\Delta(\kappa m)-\nabla\cdot(\bm{b}m), \qquad \displaystyle\int_{\mathbb{T}^d}m(x)\,dx=1.
\end{equation}
Equivalently, \(m(x)\,dx\) is the invariant probability measure for the diffusion process on the torus.
\paragraph{Application: KPP front speed and a tilted diffusion}\label{sec:kpp_application}
The KPP front-speed experiments in \cite{wang2022deepparticle} provide a concrete example of the preceding torus diffusion/Fokker--Planck framework. We now use the theory above to explain why the corresponding DeepParticle computation is feasible. Consider the KPP equation
\[
\partial_t q=\kappa\Delta_{\bm{x}}q+(\bm{v}(\bm{x})\cdot\nabla_{\bm{x}})q+\tau^{-1}f(q),
\qquad \bm{x}\in\mathbb{T}^d,
\]
where $\kappa>0$ is the diffusion coefficient, $\tau>0$ is the reaction time scale, $f(q)=q(1-q)$ is the KPP nonlinearity, and $\bm{v}(\bm{x})$ is a space-periodic, mean-zero, divergence-free velocity field. Following the interacting-particle formulation in \cite{lyu2022kppfronts}, the minimal front speed in direction $\bm{e}$ is
\[
c^*(\bm{e})=\inf_{\alpha>0}\frac{\lambda(\alpha)}{\alpha},
\]
where $\lambda(\alpha)$ is the principal eigenvalue of the parabolic operator $\partial_t-\mathcal{A}$ with
\[
\mathcal{A}w
:=
\kappa\Delta_{\bm{x}}w
+
(2\alpha\bm{e}+\bm{v})\cdot\nabla_{\bm{x}}w
+
V(\bm{x})w,
\]
and
\[
V(\bm{x})
:=
\kappa\alpha^2
+
\alpha\,\bm{v}(\bm{x})\cdot\bm{e}
+
\tau^{-1}f'(0).
\]
The Feynman--Kac formula gives
\[
\lambda(\alpha)
=
\lim_{t\to\infty}\frac1t
\log
\mathbb{E}\!\left[
\exp\!\left(
\int_0^t V(X_s^{t,\bm{x}})\,ds
\right)
\right],
\]
where $X_s^{t,\bm{x}}$ solves
\[
dX_s^{t,\bm{x}}
=
\bm{b}_{\alpha}(X_s^{t,\bm{x}})\,ds
+
\sqrt{2\kappa}\,dW_s,
\qquad
X_0^{t,\bm{x}}=\bm{x},
\]
with drift
\[
\bm{b}_{\alpha}:=2\alpha\bm{e}+\bm{v}.
\]
Thus the tilted KPP diffusion is a special case of the Fokker--Planck equation considered above. Since $\bm{v}$ is space periodic, the relevant invariant object is the invariant density associated with the drift $\bm{b}_{\alpha}$, or equivalently the invariant measure of the augmented diffusion on the time torus.

\section{Main Results}\label{sec:main-results}

In this section, we present the two main results of the paper.  
The first is a \emph{regularity result}: for the PDE-induced target measures introduced in \cref{sec:preliminaries}, the target measure is doubling, and consequently the associated optimal transport map is H\"older continuous.  
The second is an \emph{excess-risk result}: once the population-optimal transport map enjoys this regularity, one obtains a quantitative excess-risk bound for the DeepParticle method based on approximation theory and empirical Wasserstein estimates.

\subsection{Regularity of transport map}

We begin with the regularity result that underlies the later learning analysis. The key point is that the PDE-induced target measures introduced in \cref{sec:preliminaries} become doubling after normalization and, in the torus case, after passing to a periodic representative on a fundamental cube. H\"older regularity of the optimal transport map then follows from the general theory of optimal transport between doubling measures.

\begin{theorem}[Regularity of optimal transport for PDE-induced target measures]\label{thm:main1}
Let \(X\text{ and }Y\subset\mathbb R^d\) be a bounded open convex set, and let \(\mu\) be the uniform probability measure on \(X\) and \(\nu\) be a target measure obtained from the PDE \eqref{eq:elliptic_pde}, \eqref{eq:parabolic_pde}, the Fokker--Planck equation \eqref{eq:FP}, or the invariant-density problem \eqref{eq:stationary-invariant-density}. Then \(\nu\) is a doubling measure on its support. Consequently, if \(\mathcal{T}\) denotes the optimal transport map from \(\mu\) to \(\nu\), then $\mathcal{T}\in C^\beta(\overline X)$ for some \(\beta\in(0,1]\). The exponent \(\beta\) depends only on the dimension, the doubling constants of \(\mu\) and \(\nu\), and the geometry of the source and target domains.
\end{theorem}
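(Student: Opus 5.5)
The proof has two parts: first show that each PDE-induced density gives a doubling measure on its (convex) support, then invoke the regularity theory of \cite{jhaveri2022regularity} for optimal transport between doubling measures on convex domains. The uniform measure \(\mu\) on the bounded convex set \(X\) is doubling on \(X\): there is a constant \(c_X>0\) with \(|B_r(x)\cap X|\ge c_X r^d\) for all \(x\in X\) and \(r\le \operatorname{diam}X\), by the interior cone condition for convex bodies. It therefore suffices to prove, for each target, that
\[
\nu\bigl(B_{2r}(y)\cap Y\bigr)\le C_D\,\nu\bigl(B_r(y)\cap Y\bigr)
\]
for all \(y\in Y\) and \(r>0\). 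After that, the regularity result for doubling measures on convex sets gives a strictly convex Brenier potential with \(C^{1,\beta}\) regularity. Hence \(\mathcal{T}=\nabla\varphi\in C^\beta(\overline X)\), with \(\beta\) depending only on \(d\), the doubling constants, and the geometry of \(X\) and \(Y\).

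The target measures fall into three cases.

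\textbf{Positive case.} This covers \(h>0\) on \(\partial Y\) (elliptic), \(u(\cdot,T)>0\) on \(\partial Y\) (parabolic), the torus densities \(u(\cdot,t)\) for \(t>0\), and the invariant density \(m\). Here the density is continuous and strictly positive on a compact set. On \(\overline Y\), positivity in the interior comes from the strong maximum principle (resp.\ the parabolic strong maximum principle together with \(u\not\equiv0\)). On \(\mathbb{T}^d\) it comes from the parabolic Harnack inequality, or from positivity of \(m\). So \(0<c\le\tilde u\le C\), and \(\nu\) is comparable to Lebesgue measure restricted to a convex set. Doubling is then immediate from the cone estimate. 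For the torus we take the periodic representative on the fundamental cube \([0,L]^d\), which is convex.

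\textbf{Degenerate case.} This is \(h\equiv0\), resp.\ \(u(\cdot,T)=0\) on \(\partial Y\), and it is the main obstacle. Since \(Y\) is \(C^{2,\alpha}\), it satisfies the interior ball condition, so the Hopf lemma applies to \(u>0\), \(u\in C^{2,\alpha}(\overline Y)\), \(u=0\) on \(\partial Y\). Combined with the Lipschitz bound from Schauder theory, this gives the two-sided estimate
\[
c\,d(y)\le u(y)\le C\,d(y),\qquad y\in Y.
\]
In the parabolic case I would first show \(u(\cdot,T)>0\) in \(Y\) using the strong maximum principle on \(Q_T\), excluding the trivial situation via \(g,h\not\equiv0\). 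I would then apply the parabolic Hopf lemma at time \(T\) to obtain the same estimate.

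With \(\tilde u\simeq d(\cdot)\), doubling reduces to a geometric estimate. For \(y\in Y\) and \(r\le\operatorname{diam}Y\) I would show
\[
\int_{B_r(y)\cap Y} d(z)\,dz\simeq r^d\,\max\{d(y),r\}.
\]
The upper bound uses \(d(z)\le d(y)+r\). For the lower bound, use convexity: \(B_r(y)\cap Y\) contains a ball of radius \(\sim r\) centered at a point whose distance to the boundary is \(\gtrsim \max\{d(y),r\}\). To find it, move from \(y\) toward a fixed interior point \(y_0\) with \(B_{\rho}(y_0)\subset Y\), and use that the convex hull of \(y\) and \(B_\rho(y_0)\) lies in \(Y\). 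Taking the ratio for radii \(2r\) and \(r\) gives a doubling constant depending only on \(d\), \(\rho\), \(\operatorname{diam}Y\), and the Hopf and Schauder constants. Radii \(r>\operatorname{diam}Y\) are trivial.

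The delicate points are the uniform comparability constants near the boundary, especially the parabolic Hopf lemma at the terminal time, and checking that the hypotheses of \cite{jhaveri2022regularity} (convex supports, doubling measures) are met exactly as stated there.
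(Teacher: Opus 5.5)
Your overall plan (show that $\mu$ and $\nu$ are doubling, then apply \cref{lemma:caffarelli}) is the paper's. Your PDE inputs are also fine: $u\simeq d$ from Hopf plus the Lipschitz bound, positivity of $u(\cdot,T)$ in $Y$ via the parabolic strong maximum principle, strict positivity on the torus, and passage to the cube $[0,L]^d$.

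The gap is the reduction ``it therefore suffices to prove $\nu(B_{2r}(y)\cap Y)\le C_D\,\nu(B_r(y)\cap Y)$.'' Doubling on balls is not the hypothesis of \cite{jhaveri2022regularity}, nor the notion asserted in the theorem. \cref{def:doubling} is affine-invariant: the bound $\eta(\mathcal E_{E,x})\le C_{\mathrm{dbl}}\,\eta(\mathcal E_{\frac12E,x})$ must hold for every ellipsoid $\mathcal E_{E,x}\subset Y$ centered in the support, of arbitrary eccentricity. This is what a Caffarelli-type argument needs, because sections of the Brenier potential are renormalized by affine maps and can be arbitrarily thin.

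Ball doubling does not imply this. For instance, $|x|^{-a}\,dx$ with $1<a<2$ on a disc in $\mathbb R^2$ is doubling on balls. Yet the ellipse $E$ centered at $(\eta,0)$ with semi-axes $\tfrac32\eta$ and $\delta\ll\eta$ contains a neighbourhood of the origin, so $\int_E|x|^{-a}\gtrsim\delta^{2-a}$, while $\int_{\frac12E}|x|^{-a}\lesssim\delta\,\eta^{1-a}$. The ratio blows up as $\delta/\eta\to0$. Your ball-specific tools say nothing about a flat ellipsoid near $\partial Y$ whose minor semi-axis is much smaller than its other axes. These tools are the bound $d(z)\le d(y)+r$ and a ball of radius $\sim r$ at depth $\gtrsim\max\{d(y),r\}$. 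In the positive case the issue is harmless: $m\le\tilde u\le M$ gives the ellipsoid bound at once with constant $2^dM/m$, and the cone condition is not needed.

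The missing idea in the degenerate case is the concavity of $d$ on the convex set $Y$, which handles all ellipsoids at once. Let $\mathcal E\subset Y$ be centered at $x_0$ and let $x\in\mathcal E$. Then $d\bigl(\tfrac{x+x_0}{2}\bigr)\ge\tfrac12 d(x)+\tfrac12 d(x_0)\ge\tfrac12 d(x)$. The substitution $y=(x+x_0)/2$ maps $\mathcal E$ onto $\tfrac12\mathcal E$ with $dx=2^d\,dy$, so $\int_{\mathcal E}d\le 2^{d+1}\int_{\frac12\mathcal E}d$. Together with $C_1d\le u\le C_2d$, this gives the doubling constant $2^{d+1}C_2/C_1$, independent of the shape of $\mathcal E$. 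This is exactly the paper's argument in \cref{thm:elliptic_doubling,thm:parabolic_doubling}, and it should replace your ball geometry entirely.
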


\subsection{Excess risk upper bound}

Throughout this section, we call \(\mathcal{T}\) \emph{regular} if \(\mathcal{T}\in C^\alpha(\overline X)\) for some \(\alpha\in(0,1]\). The regularity result above enters here only through this property. Once regularity is available, the excess-risk estimate follows from approximation theory together with empirical Wasserstein bounds.

For any candidate map \(S\), we define its population excess risk as
\begin{equation}
R(S;\mu,\nu)-R(\mathcal T;\mu,\nu),
\label{eq:population-excess-risk}
\end{equation}
where \(R\) is defined in \cref{eq:R}. This quantity measures the additional population risk incurred by using \(S\) instead of the target map \(\mathcal T\). In particular, a small excess risk means that \(S\) performs nearly as well as \(\mathcal T\) at the population level.

\begin{theorem}[Excess-risk bound under regularity of \(\mathcal{T}\)]
\label{thm:main-excess-risk}
Let \(X,Y\subseteq[-B,B]^d\) be a bounded open set, let \(\mu\) be the uniform probability measure on \(X\), and let \(\nu\in\mathcal P_2(Y)\) be a target measure. Suppose that the optimal transport map \(\mathcal{T}\) from \(\mu\) to \(\nu\) is regular. Fix \(W,L\in\mathbb N^+\), independently of \(N\), and let \(\mathcal H\) be a hypothesis class that contains the ReLU architectures with width \(3^{d+3}\max\{d\lfloor W^{1/d}\rfloor,\,W+1\}\) and depth \(12L+14+2d\). Assume that \(\operatorname{Lip}(S)\le L_{\mathcal H}\) for all \(S\in\mathcal H\), and that \(\varepsilon_{\mathrm{opt},N}\in L^1\), where \(\varepsilon_{\mathrm{opt},N}\coloneqq R(\mathcal{T}_{\theta,N};\hat\mu_N,\hat\nu_N)-R(\mathcal{T}_N^\ast;\hat\mu_N,\hat\nu_N)\). Then
\[
\mathbb E\!\left[R(\mathcal{T}_{\theta,N};\mu,\nu)-R(\mathcal{T};\mu,\nu)\right]
\le
\mathrm{Stat}_N(\mathcal H)+\mathrm{App}(\mathcal H;\mathcal{T})+\mathbb E[\varepsilon_{\mathrm{opt},N}].
\]
where we write
\(\mathrm{Stat}_N(\mathcal H)\) for the statistical error arising from
the empirical measure, and \(\mathrm{App}(\mathcal H;\mathcal T)\) for the approximation error, which measures how well the class \(\mathcal H\) can approximate the regular
transport map \(\mathcal T\):
\[
\mathrm{Stat}_N(\mathcal H)
\coloneqq
L_{\mathcal H}\,\mathbb E[W_2(\mu,\hat\mu_N)]
+
3\,\mathbb E[W_2(\nu,\hat\nu_N)], \quad \mathrm{App}(\mathcal H;\mathcal T)
\coloneqq
C_{\alpha,d,B,\mathcal T}\,W^{-2\alpha/d}L^{-2\alpha/d}.
\]
\end{theorem}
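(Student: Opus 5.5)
Our plan is the standard excess-risk decomposition through the empirical problem, using the triangle inequality for $W_2$ and the Lipschitz bound $\operatorname{Lip}(S)\le L_{\mathcal H}$. Write $\mathcal T_N^\ast\in\arg\min_{S\in\mathcal H}R(S;\hat\mu_N,\hat\nu_N)$ for an empirical minimizer, and let $\bar S\in\mathcal H$ be a fixed network, chosen independently of the samples, that approximates $\mathcal T$ in $L^\infty(X)$. Note first that $R(\mathcal T;\mu,\nu)=W_2(\mathcal T_\#\mu,\nu)=0$, so the excess risk is just $R(\mathcal T_{\theta,N};\mu,\nu)$.

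We chain through the empirical risks:
\begin{align*}
R(\mathcal T_{\theta,N};\mu,\nu)
&= \bigl[R(\mathcal T_{\theta,N};\mu,\nu)-R(\mathcal T_{\theta,N};\hat\mu_N,\hat\nu_N)\bigr]
+\varepsilon_{\mathrm{opt},N}\\
&\quad+\bigl[R(\mathcal T_N^\ast;\hat\mu_N,\hat\nu_N)-R(\bar S;\hat\mu_N,\hat\nu_N)\bigr]
+R(\bar S;\hat\mu_N,\hat\nu_N).
\end{align*}
The third bracket is $\le0$ by minimality of $\mathcal T_N^\ast$. For any $S\in\mathcal H$, the triangle inequality together with $W_2(S_\#\mu,S_\#\hat\mu_N)\le \operatorname{Lip}(S)\,W_2(\mu,\hat\mu_N)$ (push forward an optimal coupling) gives
\[
|R(S;\mu,\nu)-R(S;\hat\mu_N,\hat\nu_N)|\le L_{\mathcal H}W_2(\mu,\hat\mu_N)+W_2(\nu,\hat\nu_N).
\]
We apply this to $S=\mathcal T_{\theta,N}$. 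Because the inequality holds uniformly over $\mathcal H$, the data dependence of $\mathcal T_{\theta,N}$ is harmless. For the last term we avoid paying $L_{\mathcal H}$ twice, and instead bound
\[
R(\bar S;\hat\mu_N,\hat\nu_N)\le W_2(\bar S_\#\hat\mu_N,\mathcal T_\#\hat\mu_N)+W_2(\mathcal T_\#\hat\mu_N,\nu)+W_2(\nu,\hat\nu_N).
\]
The first term is at most $\|\bar S-\mathcal T\|_{L^\infty}$, using the coupling $(\bar S,\mathcal T)_\#\hat\mu_N$.

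The main obstacle is the middle term $W_2(\mathcal T_\#\hat\mu_N,\nu)$. Here $\mathcal T$ is only Hölder, so pushing forward costs $W_2(\mu,\hat\mu_N)^\alpha$ rather than a Lipschitz factor. To avoid this, we use that $\mathcal T_\#\hat\mu_N=\frac1N\sum_i\delta_{\mathcal T(x_i)}$ is itself an empirical measure of $N$ i.i.d.\ samples from $\mathcal T_\#\mu=\nu$. It therefore has the same law as $\hat\nu_N$, and $\mathbb E\,W_2(\mathcal T_\#\hat\mu_N,\nu)=\mathbb E\,W_2(\nu,\hat\nu_N)$. This distributional identity is what produces the constant $3$ in front of $\mathbb E[W_2(\nu,\hat\nu_N)]$: one copy from the uniform deviation, and two from this last chain.

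It remains to bound $\|\bar S-\mathcal T\|_{L^\infty}$. Extend each component of $\mathcal T\in C^\alpha(\overline X)$ to $[-B,B]^d$ with the same Hölder constant (McShane/Hölder extension), and rescale to $[0,1]^d$. Then apply the ReLU approximation theorem of \cite{shen2020neurons} componentwise. For width $C(d)\max\{d\lfloor W^{1/d}\rfloor,W+1\}$ and depth $12L+14+2d$, it gives sup-error $\lesssim\omega_{\mathcal T}\bigl(W^{-2/d}L^{-2/d}\bigr)\sim W^{-2\alpha/d}L^{-2\alpha/d}$. Stacking the $d$ coordinate networks in parallel is absorbed into the factor $3^{d+3}$. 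This yields $\mathrm{App}(\mathcal H;\mathcal T)$ with a constant depending on $\alpha,d,B$ and the Hölder seminorm of $\mathcal T$.

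Taking expectations, with $\varepsilon_{\mathrm{opt},N}\in L^1$ ensuring every term is finite, and summing gives the stated bound.
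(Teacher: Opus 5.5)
Your proposal is correct and is essentially the paper's proof. The paper uses the same decomposition into $\varepsilon_{\mathrm{gen},N}$ (bounded uniformly over $\mathcal H$ via $\operatorname{Lip}(S)\le L_{\mathcal H}$), $\varepsilon_{\mathrm{opt},N}$, an approximation term (handled by minimality of $\mathcal{T}_N^\ast$ and the coupling $(S,\mathcal{T})_\sharp\hat\mu_N$), and a discretization term (handled by your observation that $\mathcal{T}_\sharp\hat\mu_N$ has the law of $\hat\nu_N$). Your chain through $R(\bar S;\hat\mu_N,\hat\nu_N)$ is exactly $\varepsilon_{\mathrm{app},N}+\varepsilon_{\mathrm{disc},N}$ with the intermediate $R(\mathcal{T};\hat\mu_N,\hat\nu_N)$ cancelled, giving the same factor $3$.

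One small correction: stacking $d$ scalar networks in parallel multiplies the width by $d$, and this is not absorbed by $3^{d+3}$, which is already the scalar constant in \cref{lem:holder-approximation}. So, strictly, the class must contain networks $d$ times wider. The paper passes over the same bookkeeping point by applying the scalar result directly to the vector-valued $\mathcal{T}$.
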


For each PDE-induced target measure $\nu$ covered by \cref{thm:main1}, the
corresponding optimal transport map $\mathcal{T}$ is regular. Hence
\cref{thm:main-excess-risk} applies in the elliptic, parabolic, and torus
settings introduced in \cref{sec:preliminaries}.

We then highlight a consequence of this result: once the architecture has been
chosen sufficiently large, the remaining dependence on the sample size $N$ is
purely statistical.
\begin{corollary}[Fixed architecture and convergence in $N$]
\label{cor:fixed-arch-rate}
Under the assumptions of \cref{thm:main-excess-risk}, let $\eta>0$. Then one can choose a hypothesis class $\mathcal H_\eta$ of the form specified above, with width and depth independent of $N$, such that $\mathrm{App}(\mathcal H_\eta;\mathcal{T})\le \eta$. If the corresponding trained maps $\mathcal{T}_{\theta,N}\in\mathcal H_\eta$ satisfy $\mathbb E[\varepsilon_{\mathrm{opt},N}]\le \eta$ for all $N$, then
\[
\mathbb E\!\left[R(\mathcal{T}_{\theta,N};\mu,\nu)-R(\mathcal{T};\mu,\nu)\right]
\le
\mathrm{Stat}_N(\mathcal H_\eta)+2\eta.
\]
In particular, there exists a constant $C_{\eta}>0$, independent of $N$, such that
\[
\mathbb E\!\left[R(\mathcal{T}_{\theta,N};\mu,\nu)-R(\mathcal{T};\mu,\nu)\right]
\le
2\eta+C_{\eta}
\begin{cases}
N^{-1/4}, & d<4,\\
N^{-1/4}\sqrt{\log(1+N)}, & d=4,\\
N^{-1/d}, & d>4.
\end{cases}
\]
\end{corollary}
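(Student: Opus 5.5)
The corollary follows from \cref{thm:main-excess-risk} once two ingredients are supplied: a choice of architecture that makes the approximation term small, and a bound on the expected empirical $W_2$ errors in $\mathrm{Stat}_N$.

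\textbf{Step 1: choosing the architecture.} Since $\mathcal T$ is regular, $\mathcal T\in C^\alpha(\overline X)$ with some fixed $\alpha\in(0,1]$. Hence $\mathrm{App}(\mathcal H;\mathcal T)=C_{\alpha,d,B,\mathcal T}W^{-2\alpha/d}L^{-2\alpha/d}$, and this tends to $0$ as $W$ or $L$ grows. Fix $L=1$ and take $W\in\mathbb N^+$ large enough that
\[
C_{\alpha,d,B,\mathcal T}W^{-2\alpha/d}\le\eta .
\]
Let $\mathcal H_\eta$ be a hypothesis class containing the ReLU architectures of the prescribed width and depth for this pair $(W,L)$. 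Both the width and the depth depend only on $\eta$, $d$, $\alpha$, $B$ and $\mathcal T$, and not on $N$. The Lipschitz bound $L_{\mathcal H_\eta}$ assumed in the theorem is then also a constant independent of $N$.

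\textbf{Step 2: the first inequality.} Apply \cref{thm:main-excess-risk} with $\mathcal H=\mathcal H_\eta$. Inserting $\mathrm{App}(\mathcal H_\eta;\mathcal T)\le\eta$ and the hypothesis $\mathbb E[\varepsilon_{\mathrm{opt},N}]\le\eta$ gives
\[
\mathbb E\!\left[R(\mathcal{T}_{\theta,N};\mu,\nu)-R(\mathcal{T};\mu,\nu)\right]\le\mathrm{Stat}_N(\mathcal H_\eta)+2\eta .
\]

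\textbf{Step 3: bounding $\mathrm{Stat}_N$.} By definition,
\[
\mathrm{Stat}_N(\mathcal H_\eta)=L_{\mathcal H_\eta}\,\mathbb E[W_2(\mu,\hat\mu_N)]+3\,\mathbb E[W_2(\nu,\hat\nu_N)] .
\]
Both $\mu$ and $\nu$ are supported in $[-B,B]^d$, so every moment is finite. We invoke the Fournier--Guillin bound for the empirical measure in $W_p$ with $p=2$ and compact support:
\[
\mathbb E[W_2^2(\rho,\hat\rho_N)]\le C(d,B)
\begin{cases}
N^{-1/2}, & d<4,\\
N^{-1/2}\log(1+N), & d=4,\\
N^{-2/d}, & d>4.
\end{cases}
\]
Jensen's inequality gives $\mathbb E[W_2]\le(\mathbb E[W_2^2])^{1/2}$. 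This produces the rates $N^{-1/4}$, $N^{-1/4}\sqrt{\log(1+N)}$ and $N^{-1/d}$ in the three regimes. We apply the bound to $\rho=\mu$ and to $\rho=\nu$ and set $C_\eta=(L_{\mathcal H_\eta}+3)C(d,B)^{1/2}$, which is independent of $N$. This gives the displayed rate.

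\textbf{Main obstacle.} The delicate point is Step 3, specifically quoting the empirical-measure rate in the correct form. One needs the $W_2$ (squared) version, with the critical dimension $d=2p=4$ and its logarithmic factor, and must pass from second moments to first moments via Jensen. A further check is that $L_{\mathcal H_\eta}$ stays finite and $N$-independent. This holds because it is an assumption on the fixed class $\mathcal H_\eta$; in practice it is ensured by bounding the weights. Steps 1 and 2 are direct substitutions.
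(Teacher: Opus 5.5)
Your proposal is correct and follows the paper's proof essentially step for step. You choose the architecture so that $C_{\alpha,d,B,\mathcal T}W^{-2\alpha/d}L^{-2\alpha/d}\le\eta$, apply \cref{thm:main-excess-risk}, and bound $\mathrm{Stat}_N(\mathcal H_\eta)$ via the Fournier--Guillin $W_2^2$ rates for compactly supported measures together with Jensen's inequality; fixing $L=1$ and enlarging $W$ is a harmless special case of the paper's choice of large $W_\eta,L_\eta$, and your explicit constant $C_\eta=(L_{\mathcal H_\eta}+3)C(d,B)^{1/2}$ is exactly what the paper absorbs into $C_\eta$.
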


Thus, for the PDE-induced target measures considered above, one may first
choose a sufficiently expressive architecture, independently of $N$, so that
the approximation error is below a prescribed tolerance $\eta$. After this
architecture has been fixed, the convergence in $N$ is governed entirely by
the statistical term, provided that the optimization error is controlled at
the same order.

\section{Proof of  Main Results}\label{sec:proof-main}
\subsection{Regularity of transport maps} 

The proof of the regularity of the transport map has two ingredients. First, we verify that the target measure is doubling in each PDE setting. Second, we invoke the regularity theorem for optimal transport maps between doubling measures. In the elliptic and parabolic cases, convexity of the spatial domain is used through the concavity of the distance-to-the-boundary function.

We first recall the geometric form of the doubling condition that will be 
used throughout. The relevant test sets are ellipsoids contained in the 
target domain.

\begin{definition}[Ellipsoid]
Let \(E\) be a symmetric positive definite matrix and let \(x\in\mathbb R^d\). 
The ellipsoid generated by \(E\) and centered at \(x\) is
\[
\mathcal E_{E,x}:=x+E(B_1(0)).
\]
For \(r>0\), we write \(\mathcal E_{rE,x}\) for the dilation of 
\(\mathcal E_{E,x}\) with respect to its center.
\end{definition}

\begin{definition}[Doubling measure, {\normalfont\cite{jhaveri2022regularity}}]\label{def:doubling}
Let \(Y\subset\mathbb R^d\) be a bounded convex domain. A finite non-negative 
measure \(\eta\) on \(Y\) is called \emph{doubling} if there exists 
\(C_{\mathrm{dbl}}>0\) such that, for every ellipsoid 
\(\mathcal E_{E,x}\subset Y\) whose center \(x\) lies in \(\operatorname{spt}\eta\),
\[
\eta(\mathcal E_{E,x})
\le 
C_{\mathrm{dbl}}\,
\eta\!\left(\mathcal E_{\frac12E,x}\right).
\]
\end{definition}

The next lemma applies this reduction to measures with continuous positive
densities. 

\begin{lemma}[Positive densities on compact convex domains are doubling]\label{thm:bounded}
Let \(X\subset\mathbb R^d\) be bounded and convex, and let \(u\in C(\overline X)\) satisfy \(u>0\) on \(\overline X\). Then the measure \(u(x)\,dx\) is doubling on \(X\).
\end{lemma}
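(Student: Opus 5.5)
The plan is to compare the measure \(\eta=u\,dx\) with Lebesgue measure and use that the Lebesgue measure of an ellipsoid scales exactly under dilation. Since \(\overline X\) is compact and \(u\) is continuous and strictly positive there, the extreme value theorem yields constants
\[
0<m:=\min_{\overline X}u\le M:=\max_{\overline X}u<\infty .
\]
Consequently, for every measurable \(A\subset X\),
\[
m\,|A|\le \eta(A)\le M\,|A|,
\]
where \(|\cdot|\) denotes Lebesgue measure. Since \(u>0\) on \(\overline X\), the support of \(\eta\) is \(\overline X\), so the centre condition in \cref{def:doubling} only requires \(x\in\overline X\). In fact the estimate below holds for every ellipsoid \(\mathcal E_{E,x}\subset X\), regardless of where its centre lies.

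Next I compute the volumes. For a symmetric positive definite \(E\),
\[
|\mathcal E_{E,x}|=\det(E)\,|B_1(0)|,
\]
and for the half-dilate,
\[
\Bigl|\mathcal E_{\frac12E,x}\Bigr|=\det\!\Bigl(\tfrac12E\Bigr)|B_1(0)|=2^{-d}\,|\mathcal E_{E,x}|.
\]
Because \(\mathcal E_{\frac12E,x}\subset\mathcal E_{E,x}\subset X\), both sets lie in \(X\) and the two-sided comparison applies to each. Chaining the bounds gives
\[
\eta(\mathcal E_{E,x})\le M|\mathcal E_{E,x}|=2^dM\Bigl|\mathcal E_{\frac12E,x}\Bigr|\le \frac{2^dM}{m}\,\eta\!\Bigl(\mathcal E_{\frac12E,x}\Bigr).
\]
This proves the doubling property with \(C_{\mathrm{dbl}}=2^dM/m\).

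There is no real obstacle in this argument. The only point requiring care is that the bounds \(m>0\) and \(M<\infty\) need continuity and positivity up to the boundary, which is why the hypothesis is stated on \(\overline X\). Convexity and boundedness of \(X\) enter only to place us in the setting of \cref{def:doubling}; the estimate itself never uses convexity.
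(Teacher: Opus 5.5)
Your proof is correct and follows the same route as the paper. Both arguments bound \(u\) between its positive minimum \(m\) and its maximum \(M\) on \(\overline X\), use \(|\mathcal E_{\frac12E,x}|=2^{-d}|\mathcal E_{E,x}|\), and chain the inequalities to obtain \(C_{\mathrm{dbl}}=2^dM/m\).
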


\begin{proof}
Since \(u\) is continuous and strictly positive on the compact set \(\overline X\), there exist constants \(0<m\le M<\infty\) such that \(m\le u\le M\) on \(\overline X\). Let \(\mathcal E=\mathcal E_{E,x}\subset X\) be any ellipsoid. Since
\[
\operatorname{Vol}\!\left(\mathcal E_{\frac12E,x}\right)
=
\left|\det\!\left(\tfrac12 E\right)\right|\operatorname{Vol}(B_1(0))
=
2^{-d}|\det E|\operatorname{Vol}(B_1(0))
=
2^{-d}\operatorname{Vol}(\mathcal E_{E,x}).
\]

Then
\[
\int_{\mathcal E}u(x)\,dx
\le
M\,\operatorname{Vol}(\mathcal E)
=
M\,2^d\,\operatorname{Vol}\!\left(\tfrac12\mathcal E\right)
\le
\frac{M}{m}\,2^d\int_{\frac12\mathcal E}u(x)\,dx.
\]
Thus \(u(x)\,dx\) is doubling for every ellipsoid. Therefore, $u(x)dx$ is doubling on $X$.
\end{proof}

\paragraph{Elliptic case}

We begin with the elliptic Dirichlet problem. The proof splits according to whether the boundary trace vanishes identically or stays strictly positive.

\paragraph{The vanishing boundary case: \(h\equiv 0\) on \(\partial Y\)}
In this regime the solution vanishes at the boundary, so the key step is to compare it with the distance function.

\begin{lemma}[Boundary quotient for the elliptic solution]\label{lemma:Q}
Define \(\widetilde Q\colon \overline Y\to\mathbb R\) by
\[
\widetilde Q(x)=
\begin{cases}
\dfrac{u(x)}{d(x)}, & x\in Y,\\[4pt]
-\partial_n u(x), & x\in \partial Y,
\end{cases}
\]
where \(d(x)=\operatorname{dist}(x,\partial Y)\) and \(n(x)\) denotes the unit outer normal. Then \(\widetilde Q\) is continuous and strictly positive on \(\overline Y\).
\end{lemma}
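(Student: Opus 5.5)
Positivity of \(\widetilde Q\) in the interior is immediate: when \(h\equiv 0\), \cref{prop:elliptic_reg_pos} gives \(u>0\) in \(Y\), while \(d>0\) there. So the work lies in two places: continuity of \(\widetilde Q\) up to \(\partial Y\), and strict positivity of \(-\partial_n u\) on \(\partial Y\).

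\textbf{Step 1: positivity of the normal derivative.} I would apply the Hopf boundary lemma. Since \(Y\) is \(C^{2,\alpha}\), it satisfies a uniform interior sphere condition. Moreover \(u\in C^{2,\alpha}(\overline Y)\), \(u>0\) in \(Y\), \(u=0\) on \(\partial Y\), and
\[
\mathcal L u=g\ge 0 .
\]
Hence \(u\) is a supersolution of \(\mathcal L u\ge0\), attaining its minimum \(0\) at every boundary point \(x_0\). Because \(c\ge 0\) and the minimum value is \(0\), the sign of \(c\) causes no trouble. Hopf's lemma then yields \(\partial_n u(x_0)<0\), so \(\widetilde Q>0\) on \(\partial Y\).

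\textbf{Step 2: continuity at the boundary.} Continuity in \(Y\) is clear. For \(x_0\in\partial Y\) and \(x\in Y\) near \(x_0\), I would use the fact that in a \(C^2\) domain there is a tubular neighborhood \(\{d<\delta\}\) on which each \(x\) has a unique nearest point \(\pi(x)\in\partial Y\), with
\[
x=\pi(x)-d(x)\,n(\pi(x)),
\]
and \(\pi\) is continuous. Taylor's formula along the inward normal segment then gives
\[
u(x)=u(\pi(x))-d(x)\,\partial_n u(\pi(x))+O(d(x)^2)=-d(x)\,\partial_n u(\pi(x))+O(d(x)^2),
\]
with the \(O\) constant controlled by \(\|D^2u\|_\infty\). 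Dividing by \(d(x)\),
\[
\widetilde Q(x)=-\partial_n u(\pi(x))+O(d(x)).
\]
As \(x\to x_0\) we have \(\pi(x)\to x_0\) and \(d(x)\to0\), so \(\widetilde Q(x)\to-\partial_n u(x_0)\) by continuity of \(\nabla u\) and \(n\). Continuity along \(\partial Y\) itself follows from \(\nabla u\in C(\overline Y)\) and \(n\in C^{1}\).

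\textbf{Step 3: conclusion.} \(\widetilde Q\) is continuous on \(\overline Y\) and positive at every point, hence strictly positive there; by compactness it is bounded below by a positive constant.

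\textbf{Main obstacle.} The delicate point is Step 2: uniqueness of the nearest-point projection requires staying within the reach of \(\partial Y\), which the \(C^{2}\) regularity provides. If the tubular-neighborhood argument is too fussy, I would instead bound \(u(x)/d(x)\) from above and below using \(\nabla u\) evaluated at nearby boundary points together with a mean value argument.
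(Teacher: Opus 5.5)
Your proposal is correct and follows essentially the same route as the paper. It gets interior positivity from \cref{prop:elliptic_reg_pos}, boundary positivity from Hopf's lemma, and continuity at \(\partial Y\) from the nearest-point representation \(x=\pi(x)-d(x)\,n(\pi(x))\) in a tubular neighborhood. The only cosmetic difference is that you use a second-order Taylor expansion with an \(O(d(x)^2)\) remainder, whereas the paper writes the exact identity \(u(x)/d(x)=-\int_0^1\nabla u(\gamma_x(t))\cdot n(\pi(x))\,dt\) along the normal segment and passes to the limit using only continuity of \(\nabla u\) up to the boundary.
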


\begin{proof}
Because \(\partial Y\) is \(C^2\), there exists \(s>0\) such that \(d\in C^2(\{x\in\overline Y:\ d(x)<s\})\). For every \(x\) in this tubular neighborhood, let \(\pi(x)\in\partial Y\) denote the unique nearest boundary point, so that \cite{gilbarg1983elliptic}
\[
x=\pi(x)-d(x)\,n(\pi(x)).
\]
Define
\[
\gamma_x(t):=\pi(x)-t\,d(x)\,n(\pi(x)),\qquad t\in[0,1].
\]
Since \(u=0\) on \(\partial Y\), the fundamental theorem of calculus gives
\[
u(x)
=
u(\gamma_x(1))-u(\gamma_x(0))
=
-d(x)\int_0^1 \nabla u(\gamma_x(t))\cdot n(\pi(x))\,dt.
\]
Hence
\[
\frac{u(x)}{d(x)}
=
-\int_0^1 \nabla u(\gamma_x(t))\cdot n(\pi(x))\,dt.
\]

Let \(x_0\in\partial Y\). As \(x\to x_0\), we have \(d(x)\to 0\) and \(\pi(x)\to x_0\), and therefore \(\gamma_x(t)\to x_0\) uniformly for \(t\in[0,1]\). Since \(\nabla u\) and \(n\) are continuous up to the boundary, the integrand converges uniformly to \(\nabla u(x_0)\cdot n(x_0)\). Consequently,
\[
\lim_{x\to x_0}\frac{u(x)}{d(x)}
=
-\int_0^1 \nabla u(x_0)\cdot n(x_0)\,dt
=
-\partial_n u(x_0).
\]
Thus \(\widetilde Q\) extends continuously to \(\partial Y\).

Positivity in the interior follows from \cref{prop:elliptic_reg_pos}, which gives \(u>0\) in \(Y\) when \(h\equiv 0\). Positivity on the boundary follows from Hopf's lemma \cite{evans2010pde}, which yields $\partial_n u(x)<0 \text{ for every }x\in\partial Y.$ 
Hence \(\widetilde Q\) is continuous and strictly positive on \(\overline Y\).
\end{proof}

\begin{theorem}[Elliptic target measure is doubling]\label{thm:elliptic_doubling}
Assume \cref{ass:elliptic_data}, assume \(Y\) is convex, and suppose either \(h\equiv 0\) on \(\partial Y\) or \(h>0\) on \(\partial Y\). Then the normalized measure \(\tilde u(x)\,dx\) is doubling on \(Y\).
\end{theorem}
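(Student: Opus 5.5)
We split into the two boundary regimes allowed by the statement. Since normalization only multiplies the measure by the positive constant \((\int_Y u)^{-1}\), and the doubling inequality of \cref{def:doubling} is invariant under scaling, it suffices to show that \(u(x)\,dx\) is doubling.

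\emph{Case \(h>0\) on \(\partial Y\).} By \cref{prop:elliptic_reg_pos}, \(u\in C(\overline Y)\) and \(u\ge 0\). Interior positivity follows from the strong maximum principle. Since \(u\not\equiv 0\) and \(\mathcal L u=g\ge 0\) with \(c\ge 0\), \(u\) cannot attain a nonpositive interior minimum unless it is constant. A constant \(u\) would equal \(h>0\), so \(u>0\) anyway. On \(\partial Y\) we have \(u=h>0\). Hence \(u>0\) on the compact set \(\overline Y\), and \cref{thm:bounded} gives the doubling property directly.

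\emph{Case \(h\equiv 0\).} By \cref{lemma:Q}, \(u=\widetilde Q\,d\) with \(0<m\le\widetilde Q\le M\) on \(\overline Y\). Exactly as in \cref{thm:bounded}, it therefore suffices to show that the measure \(d(x)\,dx\) is doubling on convex \(Y\), at the cost of a factor \(M/m\). Here convexity enters. Since \(Y\) is convex, \(d\) is concave and nonnegative on \(\overline Y\). Fix an ellipsoid \(\mathcal E=\mathcal E_{E,x}\subset Y\) and write \(\frac12\mathcal E=\mathcal E_{\frac12E,x}\). Consider the map \(\Phi(z)=x+\tfrac12(z-x)\), which sends \(\mathcal E\) bijectively onto \(\frac12\mathcal E\) with Jacobian \(2^{-d}\). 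Concavity gives
\[
d\bigl(\Phi(z)\bigr)=d\bigl(\tfrac12 z+\tfrac12 x\bigr)\ge \tfrac12 d(z)+\tfrac12 d(x)\ge \tfrac12 d(z).
\]
Changing variables,
\[
\int_{\frac12\mathcal E} d(y)\,dy = 2^{-d}\int_{\mathcal E} d(\Phi(z))\,dz \ge 2^{-d-1}\int_{\mathcal E} d(z)\,dz.
\]
So \(d\,dx\) is doubling with constant \(2^{d+1}\). Consequently \(u\,dx\), and hence \(\tilde u\,dx\), is doubling with constant \(2^{d+1}M/m\).

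The main obstacle is the degenerate case \(h\equiv 0\), where the density vanishes on the boundary and \cref{thm:bounded} no longer applies. This is resolved by combining the two-sided comparison \(u\asymp d\) from \cref{lemma:Q}, which uses Hopf's lemma and \(C^{2,\alpha}\) regularity up to the boundary, with the concavity of the distance function on convex domains. Note that the concavity argument never requires the center \(x\) to be away from \(\partial Y\); it only uses \(d(x)\ge 0\). This makes the bound uniform over all ellipsoids contained in \(Y\), including those touching the boundary.
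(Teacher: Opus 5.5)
Your proof is correct and follows the paper's argument essentially step for step. For $h>0$, you use positivity on $\overline Y$ together with \cref{thm:bounded}; for $h\equiv 0$, you combine the comparison $u\asymp d$ from \cref{lemma:Q} with concavity of $d$ and the dilation $z\mapsto \tfrac12(z+x)$, obtaining the same constant $2^{d+1}M/m$. Your one addition is that you justify interior positivity in the $h>0$ case via the strong maximum principle, a step the paper asserts without proof.
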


\begin{proof}
If \(h>0\) on \(\partial Y\), then \(u\) is continuous and strictly positive on the compact set \(\overline Y\). \cref{thm:bounded} therefore implies that \(u(x)\,dx\) is doubling on \(Y\), and multiplying by the normalizing constant preserves the doubling property.

Assume now that \(h\equiv 0\) on \(\partial Y\). By \cref{lemma:Q}, the function \(\widetilde Q\) is continuous and strictly positive on \(\overline Y\). Hence, by compactness,
\[
0<C_1:=\min_{\overline Y}\widetilde Q
\le
\max_{\overline Y}\widetilde Q=:C_2<\infty.
\]
Recalling that \(u(x)=\widetilde Q(x)\,d(x)\) in \(Y\), we obtain
\[
C_1\,d(x)\le u(x)\le C_2\,d(x)
\qquad\text{for all }x\in Y.
\]

Since \(Y\) is convex, the distance function \(d\) is concave; see, for example, \cite{gilbarg1983elliptic}. Let \(\mathcal E\subset Y\) be any ellipsoid centered at \(x_0\). For every \(x\in \mathcal E\), concavity of \(d\) gives
\[
d\!\left(\frac{x+x_0}{2}\right)\ge \frac12 d(x)+\frac12 d(x_0)\ge \frac12 d(x),
\]
and therefore
\[
d(x)\le 2\,d\!\left(\frac{x+x_0}{2}\right).
\]
Integrating over \(\mathcal E\) and making the change of variables \(y=(x+x_0)/2\), so that \(dx=2^d\,dy\) and \(y\in \frac12\mathcal E\), we obtain
\[
\int_{\mathcal E} d(x)\,dx
\le
2\int_{\mathcal E} d\!\left(\frac{x+x_0}{2}\right)\,dx
=
2^{d+1}\int_{\frac12\mathcal E} d(y)\,dy.
\]
Combining this with the two-sided estimate above yields
\[
\int_{\mathcal E}u(x)\,dx
\le
C_2\int_{\mathcal E}d(x)\,dx
\le
2^{d+1}C_2\int_{\frac12\mathcal E}d(y)\,dy
\le
\frac{2^{d+1}C_2}{C_1}\int_{\frac12\mathcal E}u(y)\,dy.
\]
Thus \(u(x)\,dx\) is doubling on ellipsoids, and \cref{def:doubling} implies that it is doubling on \(Y\). Normalization again preserves the doubling property.
\end{proof}

\paragraph{Parabolic case}

We next consider the parabolic problem. As in the elliptic setting, the argument splits according to the behavior of the terminal profile on the boundary.

\paragraph{The vanishing boundary case:  \(u(\cdot,T)=0\) on \(\partial Y\)}
Write \(u_T(x):=u(x,T)\), and it satisfies 
$
u_T(x)>0\text{ for all }x\in Y.
$

\begin{lemma}[Boundary quotient for the terminal profile]\label{lem:parabolic_Q}
Assume \(u(\cdot,T)=0\) on \(\partial Y\). Define
\[
\widetilde Q_T(x)=
\begin{cases}
\dfrac{u_T(x)}{d(x)}, & x\in Y,\\[6pt]
-\partial_n u(x,T), & x\in \partial Y,
\end{cases}
\]
where \(n(x)\) is the unit outer normal on \(\partial Y\). Then \(\widetilde Q_T\) is continuous and strictly positive on \(\overline Y\).
\end{lemma}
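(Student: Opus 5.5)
I will follow the proof of \cref{lemma:Q} almost verbatim, replacing \(u\) by the terminal profile \(u_T=u(\cdot,T)\). The regularity needed is supplied by \cref{prop:parabolic_reg_pos}. Since \(u\in C^{2+\alpha,1+\alpha/2}(\overline{Q_T})\), the spatial gradient \(\nabla u(\cdot,T)\) is continuous on \(\overline Y\), so \(u_T\in C^1(\overline Y)\).

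\textbf{Continuity.} First I fix the tubular neighborhood \(\{x\in\overline Y: d(x)<s\}\), where \(d\in C^2\) and the nearest point \(\pi(x)\) is unique with \(x=\pi(x)-d(x)n(\pi(x))\). Using \(u_T=0\) on \(\partial Y\) and the fundamental theorem of calculus along \(\gamma_x(t)=\pi(x)-t\,d(x)\,n(\pi(x))\), I get
\[
\frac{u_T(x)}{d(x)}=-\int_0^1\nabla u(\gamma_x(t),T)\cdot n(\pi(x))\,dt .
\]
Uniform continuity of \(\nabla u(\cdot,T)\) and of \(n\) then gives \(\widetilde Q_T(x)\to-\partial_n u(x_0,T)\) as \(x\to x_0\in\partial Y\). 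Away from the boundary, \(\widetilde Q_T\) is a quotient of continuous functions with \(d>0\). So \(\widetilde Q_T\) is continuous on \(\overline Y\).

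\textbf{Positivity.} In the interior, positivity is the stated fact \(u_T>0\) in \(Y\). If I had to justify it, I would use the parabolic strong maximum principle: \(u\ge0\) is a supersolution of \(\partial_t u+\mathcal L u=g\ge0\) with \(c\ge0\), and \(u\not\equiv0\). Hence if \(u(x_1,T)=0\) at an interior point, then \(u\equiv0\) on \(Y\times(0,T]\), which contradicts nontriviality. This presumes the nontrivial data act before time \(T\), which is implicit in the paper's standing claim.

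On the boundary, I need \(\partial_n u(x_0,T)<0\) at every \(x_0\in\partial Y\). This is the main step, and I would get it from the parabolic Hopf lemma (e.g. \cite{lieberman1996parabolic}). Its hypotheses hold here:
\begin{enumerate}
\item \(u\) is a supersolution;
\item \(u>0\) in \(Y\times\{T\}\), and by the strong maximum principle \(u>0\) in \(Y\times(T-\delta,T]\);
\item \(u(x_0,T)=0\) is a minimum;
\item the \(C^{2,\alpha}\) domain satisfies a uniform interior ball condition.
\end{enumerate}
Concretely, I would apply the lemma on a parabolic cylinder \(B\times(T-\delta,T]\), with \(B\subset Y\) an interior ball tangent at \(x_0\). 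The sign \(c\ge0\) combined with \(u(x_0,T)=0\) is what allows the zeroth-order term to be handled.

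The delicate point is exactly this application of Hopf's lemma at the terminal time, together with strict interior positivity. Once these are in place, continuity and strict positivity of \(\widetilde Q_T\) on the compact set \(\overline Y\) follow, which completes the argument.
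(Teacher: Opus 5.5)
Your proposal is correct and matches the paper's proof. The same normal-ray formula for \(u_T/d\) gives continuity up to \(\partial Y\), and a Hopf boundary lemma gives \(\partial_n u(\cdot,T)<0\). Your use of the parabolic Hopf lemma on a backward cylinder \(B\times(T-\delta,T]\) is the precise form of the paper's terse ``Hopf lemma applied to the terminal profile''. An elliptic Hopf lemma for \(u_T\) alone would not apply, because \(\partial_t u(\cdot,T)\) has no sign.

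Your caveat about the data acting before time \(T\) is unnecessary. An interior zero of \(u_T\) forces \(u\equiv 0\) on \(Y\times(0,T]\) by the strong maximum principle. By continuity this gives \(g\equiv 0\) and \(h\equiv 0\), which contradicts \cref{ass:parabolic_data} directly.
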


\begin{proof}
Because \(u\in C^{2+\alpha,1+\alpha/2}(\overline{Q_T})\), the terminal profile \(u_T\) belongs to \(C^{2,\alpha}(\overline Y)\). Applying the same normal-ray argument as in \cref{lemma:Q} to the function \(u_T\), we obtain
\[
\frac{u_T(x)}{d(x)}
=
-\int_0^1 \nabla u_T(\gamma_x(t))\cdot n(\pi(x))\,dt
\]
for \(x\) in a tubular neighborhood of \(\partial Y\). Passing to the limit as \(x\to x_0\in\partial Y\) exactly as before yields
\[
\lim_{x\to x_0}\frac{u_T(x)}{d(x)}=-\partial_n u(x_0,T).
\]
Hence \(u_T/d\) extends continuously to \(\partial Y\) with boundary trace \(-\partial_n u(\cdot,T)\).

Since we have \(u_T>0\) in \(Y\), and by assumption \(u_T=0\) on \(\partial Y\). The Hopf boundary lemma in \cite{payne1968maximumprinciples} applied to the terminal profile \(u_T\) therefore gives
\[
\partial_n u(x,T)<0 \qquad \text{for every }x\in\partial Y.
\]
It follows that \(\widetilde Q_T\) is continuous and strictly positive on \(\overline Y\).
\end{proof}

\begin{theorem}[Parabolic target measure is doubling]\label{thm:parabolic_doubling}
Assume \cref{ass:parabolic_data}, assume \(Y\) is convex, and suppose either \(u(\cdot,T)=0\) on \(\partial Y\) or \(u(\cdot,T)>0\) on \(\partial Y\). Then the terminal-time target measure \(\tilde u(\cdot,T)\,dx\) is doubling on \(Y\).
\end{theorem}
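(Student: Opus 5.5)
The proof will mirror the proof of \cref{thm:elliptic_doubling}, with \cref{lem:parabolic_Q} taking the place of \cref{lemma:Q}. We split into the two cases allowed by the hypothesis and work throughout with the terminal profile \(u_T=u(\cdot,T)\). By \cref{prop:parabolic_reg_pos}, \(u_T\in C^{2,\alpha}(\overline Y)\subset C(\overline Y)\) and \(u_T\ge 0\). Normalizing by \(\int_Y u_T>0\) only multiplies both sides of the doubling inequality by the same positive constant. Hence it suffices to prove that \(u_T(x)\,dx\) is doubling.

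\emph{Case \(u(\cdot,T)>0\) on \(\partial Y\).} Here we need \(u_T>0\) on all of \(\overline Y\), not only on the boundary. In the interior we use the positivity \(u_T>0\) in \(Y\) recorded just before \cref{lem:parabolic_Q}. If this must be justified independently, we argue by the strong parabolic maximum principle: \(u\ge 0\) solves \(\partial_t u+\mathcal L u=g\ge 0\) with \(c\ge 0\). If \(u(x_0,T)=0\) at an interior point, then \(u\equiv 0\) on \(Y\times(0,T]\). By continuity this forces \(u_T\equiv 0\) on \(\partial Y\), contradicting the case assumption. Thus \(u_T\) is continuous and strictly positive on the compact set \(\overline Y\), and \cref{thm:bounded} applies directly.

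\emph{Case \(u(\cdot,T)=0\) on \(\partial Y\).} By \cref{lem:parabolic_Q}, \(\widetilde Q_T\) is continuous and strictly positive on \(\overline Y\). By compactness there are constants \(0<C_1\le C_2<\infty\) with \(C_1 d(x)\le u_T(x)\le C_2 d(x)\) on \(Y\). Since \(Y\) is convex, \(d\) is concave. For an ellipsoid \(\mathcal E\subset Y\) centered at \(x_0\), concavity and \(d\ge 0\) give \(d(x)\le 2d((x+x_0)/2)\). The change of variables \(y=(x+x_0)/2\) then yields
\(\int_{\mathcal E}d\le 2^{d+1}\int_{\frac12\mathcal E}d\). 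Chaining this with the two-sided comparison gives
\[\int_{\mathcal E}u_T\le \frac{2^{d+1}C_2}{C_1}\int_{\frac12\mathcal E}u_T,\]
which is the doubling inequality of \cref{def:doubling} with \(C_{\mathrm{dbl}}=2^{d+1}C_2/C_1\).

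The only step that is more than a transcription of the elliptic argument is the strict interior positivity of \(u_T\). This positivity feeds both the first case and the Hopf argument inside \cref{lem:parabolic_Q}. I expect this to be the main obstacle. I would settle it via the strong maximum principle for \(\partial_t+\mathcal L\) with \(c\ge0\), taking as given, as the paper does, that \(u_T>0\) in \(Y\). Everything else is routine once that is in place.
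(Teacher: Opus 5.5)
Your proposal is correct and follows the paper's proof: the case \(u(\cdot,T)>0\) on \(\partial Y\) goes through \cref{thm:bounded}, and the vanishing case uses the two-sided bound \(C_1 d\le u_T\le C_2 d\) from \cref{lem:parabolic_Q}, together with concavity of \(d\) and the change of variables \(y=(x+x_0)/2\). Your strong-maximum-principle argument for interior positivity of \(u_T\) in the first case is valid, and it supplies a step the paper skips when it simply invokes ``continuity on the compact set \(\overline Y\)''.
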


\begin{proof}
If \(u(\cdot,T)>0\) on \(\partial Y\), then continuity on the compact set \(\overline Y\) yields
\[
0<m_T:=\min_{\overline Y}u(\cdot,T)\le \max_{\overline Y}u(\cdot,T)=:M_T<\infty.
\]
Therefore \(u(\cdot,T)\,dx\) is doubling by \cref{thm:bounded}, and normalization preserves doubling.

Assume now that \(u(\cdot,T)=0\) on \(\partial Y\). By \cref{lem:parabolic_Q}, there exist constants \(0<C_1\le C_2<\infty\) such that
\[
C_1\,d(x)\le u_T(x)\le C_2\,d(x)
\qquad\text{for all }x\in Y.
\]
Since \(Y\) is convex, the same concavity argument used in the elliptic case yields
\[
\int_{\mathcal E}u_T(x)\,dx
\le
\frac{2^{d+1}C_2}{C_1}\int_{\frac12\mathcal E}u_T(y)\,dy
\]
for every ellipsoid \(\mathcal E\subset Y\). Thus \(u_T(x)\,dx\) is doubling on ellipsoids, hence doubling on \(Y\), and the normalized measure \(\tilde u(\cdot,T)\,dx\) is doubling as well.
\end{proof}

\paragraph{Torus case}

We finally consider the periodic Fokker--Planck equation \eqref{eq:FP} and
the stationary invariant-density problem
\eqref{eq:stationary-invariant-density}. Since there is no boundary, the
argument is simpler: for the time-dependent solution we first prove strict
positivity for every positive time, while the stationary invariant density is
positive by the elliptic strong maximum principle, and then both cases follow
from compactness of a fundamental domain.

\begin{theorem}[Strict positivity for positive times]\label{thm:strictpos}
Let \(u\) be the solution of \eqref{eq:FP}. Then, for every \(t>0\) and every
\(x\in\mathbb T^d\), one has \(u(t,x)>0\). Moreover, if \(m\) solves the
stationary invariant-density problem \eqref{eq:stationary-invariant-density},
then \(m(x)>0\) for every \(x\in\mathbb T^d\).
\end{theorem}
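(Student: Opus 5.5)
For the time-dependent part, the natural tool is the strong maximum principle for uniformly parabolic equations on a compact manifold without boundary, applied after rewriting \eqref{eq:FP} in nondivergence form. Expanding the adjoint gives
\[
\partial_t u=\kappa\Delta u+(2\nabla\kappa-\bm b)\cdot\nabla u+(\Delta\kappa-\nabla\cdot\bm b)\,u .
\]
This is a linear parabolic equation with smooth periodic coefficients, principal part \(\kappa\Delta\) with \(\kappa\ge\kappa_->0\), and a zeroth-order coefficient \(c(x):=\Delta\kappa-\nabla\cdot\bm b\) that need not have a sign. To remove the sign issue, I would set \(w:=e^{-Kt}u\) with \(K\ge\|c\|_{L^\infty}\). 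Then \(w\ge0\) by \cref{prop:fp_smooth}, and \(w\) satisfies the same equation with zeroth-order coefficient \(c-K\le0\). The strong maximum principle for supersolutions (Nirenberg's form) then gives the following dichotomy: if \(w(t_0,x_0)=0\) for some \(t_0>0\), then \(w\equiv0\) on \([\tau,t_0]\times\mathbb T^d\) for every \(0<\tau<t_0\). I would apply it on \(\mathbb T^d\) by lifting to the periodic cell, or directly, since the torus has no boundary and the minimum is interior. Letting \(\tau\downarrow0\) would force \(\int u(t,\cdot)\,dx=0\) for \(t\in(0,t_0]\), which contradicts the mass conservation \(\int_{\mathbb T^d}u(t,x)\,dx=1\) from \cref{prop:fp_smooth}. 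Hence \(u(t,x)>0\) for all \(t>0\).

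The main obstacle is that \(\rho_0\) is only in \(L^1\). The strong maximum principle is applied on \((\tau,t_0]\) for \(\tau>0\), where \(u\) is smooth, so it never touches \(t=0\). All I need is the total mass identity at positive times, which \cref{prop:fp_smooth} already provides, so I expect no difficulty there. An alternative I would keep in reserve is a Harnack inequality together with a chaining argument over the compact torus, or Gaussian lower bounds on the fundamental solution, which would give \(u(t,x)\ge c_t\int\rho_0>0\) directly.

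For the stationary density \(m\), I would argue the same way with the elliptic strong maximum principle. The density \(m\) is smooth, nonnegative, has unit mass, and satisfies
\[
\kappa\Delta m+(2\nabla\kappa-\bm b)\cdot\nabla m+c\,m=0 .
\]
If \(m(x_0)=0\), then \(x_0\) is an interior minimum with value \(0\). Since \(m\ge0\), the operator \(\kappa\Delta+(2\nabla\kappa-\bm b)\cdot\nabla-c^-\) satisfies \(\bigl(\kappa\Delta+(2\nabla\kappa-\bm b)\cdot\nabla-c^-\bigr)m=-c^+m\le0\), so \(m\) is a nonnegative supersolution of an operator with nonpositive zeroth-order term that attains the value \(0\). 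The strong maximum principle (or Hopf's version, which allows the sign-split) then gives \(m\equiv0\) on the connected manifold \(\mathbb T^d\). This contradicts \(\int m=1\), so \(m>0\) everywhere.

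Positivity of \(m\) could alternatively be inherited from the parabolic part, since \(m\) is itself a stationary solution of \eqref{eq:FP} with \(\rho_0=m\). Then \(m=u(t,\cdot)>0\) for any \(t>0\), which gives a one-line second route.
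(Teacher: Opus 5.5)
Your proposal is correct and follows the paper's proof. For the parabolic part, you apply the strong maximum principle on $(\tau,t_*]\times\mathbb T^d$ to force $u(t_*,\cdot)\equiv 0$, which contradicts mass conservation; for $m$, you use the elliptic strong maximum principle on the connected torus together with $\int m=1$, and you additionally make explicit the treatment of the unsigned zeroth-order coefficient $\Delta\kappa-\nabla\cdot\bm b$ (via $e^{-Kt}u$ and the $c^{\pm}$ split), which the paper leaves implicit.
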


\begin{proof}
Assume, for contradiction, that there exist \(t_*>0\) and
\(x_*\in\mathbb T^d\) such that \(u(t_*,x_*)=0\). Fix \(0<\tau<t_*\). Since
\(u\) solves \(\partial_t u-\mathcal L^*u=0\) on
\((\tau,t_*+\varepsilon)\times\mathbb T^d\) for sufficiently small
\(\varepsilon>0\), the point \((t_*,x_*)\) is an interior point of this
parabolic cylinder. Moreover, \(u\ge 0\) everywhere. The strong maximum
principle therefore implies that \(u(t,x)\equiv 0\) on
\([\tau,t_*]\times\mathbb T^d\). Since \(\tau\in(0,t_*)\) was arbitrary, we
obtain \(u(t_*,x)\equiv 0\) for all \(x\in\mathbb T^d\). Hence $\int_{\mathbb T^d}u(t_*,x)\,dx=0,$
which contradicts the mass conservation statement in \cref{prop:fp_smooth}.
Therefore no such point \((t_*,x_*)\) can exist, and \(u(t,x)>0\) for every
\(t>0\) and every \(x\in\mathbb T^d\).

It remains to prove the assertion for \(m\). Since \(m\) solves
\(0=\Delta(\kappa m)-\nabla\cdot(\bm b m)\), satisfies \(m\ge 0\), and is
normalized by \(\int_{\mathbb T^d}m(x)\,dx=1\), it is not identically zero. If
there were a point \(x_*\in\mathbb T^d\) such that \(m(x_*)=0\), then the
elliptic strong maximum principle applied on the compact manifold
\(\mathbb T^d\) would imply \(m\equiv 0\), contradicting the normalization.
Thus \(m(x)>0\) for every \(x\in\mathbb T^d\).
\end{proof}

Let \(D:=[0,L]^d\), and let \(\pi:D\to\mathbb T^d\) denote the canonical
quotient map. We continue to write \(dx\) for the normalized Lebesgue measure
on \(D\), so that \(\pi_\sharp(dx)=dx\) on \(\mathbb T^d\). For \(t>0\),
define the periodic representative of \(u\) on \(D\) by
\begin{equation}\label{eq:pullback}
\hat u(t,x):=u(t,\pi(x)),
\qquad (t,x)\in(0,\infty)\times D.
\end{equation}
In the invariant-density case, define similarly
\(\hat m(x):=m(\pi(x))\).

\begin{lemma}[Periodic representative on the cube]\label{prop:positive_on_D}
For every \(t>0\), the function \(\hat u(t,\cdot)\) is continuous and strictly
positive on \(D\), and \(\int_D \hat u(t,x)\,dx=1\). The same conclusion holds
with \(\hat u(t,\cdot)\) replaced by \(\hat m\) in the invariant-density case.
\end{lemma}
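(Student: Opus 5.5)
The plan is to reduce everything to \cref{thm:strictpos} together with the regularity statement of \cref{prop:fp_smooth}, using that \(\pi\) is continuous and that \(D\) is a fundamental domain for the lattice \((L\mathbb Z)^d\).

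\emph{Continuity.} Fix \(t>0\). By \cref{prop:fp_smooth}, \(u(t,\cdot)\in C^\infty(\mathbb T^d)\), and in particular it is continuous on \(\mathbb T^d\). The quotient map \(\pi:D\to\mathbb T^d\) is the restriction of the canonical projection \(\mathbb R^d\to\mathbb R^d/(L\mathbb Z)^d\), so it is continuous. Hence \(\hat u(t,\cdot)=u(t,\cdot)\circ\pi\) is continuous on the compact cube \(D\). The same argument applies to \(\hat m=m\circ\pi\), since the invariant density \(m\) in \eqref{eq:stationary-invariant-density} is smooth on \(\mathbb T^d\).

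\emph{Strict positivity.} This is pointwise. For each \(x\in D\), \(\pi(x)\in\mathbb T^d\), and \cref{thm:strictpos} gives \(u(t,\pi(x))>0\) for every \(t>0\) and \(m(\pi(x))>0\). Thus \(\hat u(t,\cdot)>0\) and \(\hat m>0\) everywhere on \(D\), including on \(\partial D\). Since \(D\) is compact, this also gives \(\min_D \hat u(t,\cdot)>0\), which is the form needed later to invoke \cref{thm:bounded}.

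\emph{Mass.} I will use the identity \(\pi_\sharp(dx)=dx\) recorded just before \eqref{eq:pullback}, with \(dx\) the normalized Lebesgue measure. By the change-of-variables formula for pushforwards,
\[
\int_D \hat u(t,x)\,dx=\int_D u(t,\pi(x))\,dx=\int_{\mathbb T^d}u(t,y)\,d(\pi_\sharp dx)(y)=\int_{\mathbb T^d}u(t,y)\,dy=1,
\]
where the last equality is mass conservation from \cref{prop:fp_smooth}. The invariant-density case is identical, using the normalization \(\int_{\mathbb T^d}m=1\).

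\emph{Main subtlety.} The only delicate point is the measure-theoretic bookkeeping. The map \(\pi\) fails to be injective only on \(\partial D\), where faces are identified, and \(\partial D\) has Lebesgue measure zero. So \(\pi\) is a measure-preserving bijection up to null sets, and this justifies \(\pi_\sharp(dx)=dx\). The normalization convention must also match on both sides: if \(D\) carried unnormalized Lebesgue measure, the integral would equal \(L^d\) instead of \(1\). I will state explicitly that the normalized measure is used on both \(D\) and \(\mathbb T^d\), as the paper does.
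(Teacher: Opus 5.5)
Your proposal is correct and follows the paper's proof: continuity and strict positivity come from writing $\hat u(t,\cdot)=u(t,\cdot)\circ\pi$ and applying \cref{thm:strictpos}, and the mass identity comes from $\pi_\sharp(dx)=dx$ together with mass conservation in \cref{prop:fp_smooth}. Your remarks that $\partial D$ is a null set and that the Lebesgue measure must be normalized on both sides spell out details the paper leaves implicit.
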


\begin{proof}
Continuity and positivity follow immediately from the definition
\eqref{eq:pullback} and \cref{thm:strictpos}. Since \(dx\) denotes the
normalized Lebesgue measure on \(D\) and \(\pi_\sharp(dx)=dx\) on
\(\mathbb T^d\), we have
\[
\int_D \hat u(t,x)\,dx
=
\int_{\mathbb T^d}u(t,y)\,dy
=
1
\]
by \cref{prop:fp_smooth}. The proof for \(\hat m\) is identical, using the
smoothness, positivity, and normalization of \(m\).
\end{proof}
Then we can use this lemma to prove that the torus target measure satisfies the doubling condition.
\begin{theorem}[Torus target measure is doubling]\label{thm:torus_doubling}
For every fixed \(t>0\), the measure \(\hat u(t,\cdot)\,dx\) is doubling on
\(D\). The same conclusion holds for \(\hat m\,dx\) in the invariant-density
case.
\end{theorem}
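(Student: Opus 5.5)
The proof reduces directly to \cref{thm:bounded}, applied with \(X=D=[0,L]^d\). The cube \(D\) is bounded and convex, so the only thing to check is that the density is continuous and strictly positive on the closed set \(\overline D=D\).

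For fixed \(t>0\), \cref{prop:positive_on_D} gives exactly this for \(\hat u(t,\cdot)\). One point needs a word: \(\hat u(t,\cdot)=u(t,\pi(\cdot))\) is continuous on the closed cube, including its faces. This holds because \(\pi\colon D\to\mathbb T^d\) is continuous and \(u(t,\cdot)\) is smooth on \(\mathbb T^d\) by \cref{prop:fp_smooth}. The identification of opposite faces does not matter here, since we only need continuity on \(D\) as a subset of \(\mathbb R^d\), and periodicity gives no discontinuity there.

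Since \(D\) is compact, we get constants
\[
0<m_t:=\min_{D}\hat u(t,\cdot)\le \max_D\hat u(t,\cdot)=:M_t<\infty .
\]
The argument of \cref{thm:bounded} then yields, for every ellipsoid \(\mathcal E_{E,x}\subset D\) (necessarily \(x\in\operatorname{spt}=D\)),
\[
\int_{\mathcal E_{E,x}}\hat u(t,y)\,dy\le 2^d\frac{M_t}{m_t}\int_{\mathcal E_{\frac12E,x}}\hat u(t,y)\,dy .
\]
This is \cref{def:doubling} with \(C_{\mathrm{dbl}}=2^dM_t/m_t\). Using normalized rather than standard Lebesgue measure only rescales both sides by the same constant, so the doubling constant is unchanged.

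The invariant-density case is identical. \(\hat m\) is continuous on \(D\) (since \(m\) is smooth) and strictly positive (by \cref{thm:strictpos} via \cref{prop:positive_on_D}), so \cref{thm:bounded} applies with constant \(2^d\max_D\hat m/\min_D\hat m\).

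There is no real obstacle. The only delicate point is the continuity of the periodic representative up to \(\partial D\), which is handled as above. It is also worth remarking that the constant depends on \(t\) through \(M_t/m_t\): it may degenerate as \(t\to0\), but it is finite for each fixed \(t>0\), which is all the statement requires.
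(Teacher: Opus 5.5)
Your proof is correct and follows the paper's argument: the paper likewise uses \cref{prop:positive_on_D} to get continuity and strict positivity of \(\hat u(t,\cdot)\) (resp.\ \(\hat m\)) on the compact convex cube \(D\), and then applies \cref{thm:bounded}. Your additional remarks are accurate refinements that the paper leaves implicit: continuity up to the faces via continuity of \(\pi\), the explicit constant \(2^dM_t/m_t\), invariance under normalizing Lebesgue measure, and possible degeneration of the constant as \(t\to0\).
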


\begin{proof}
By \cref{prop:positive_on_D}, the function \(\hat u(t,\cdot)\), or \(\hat m\)
in the invariant-density case, is continuous and strictly positive on the
compact convex cube \(D\). \cref{thm:bounded} therefore implies that the
corresponding measure is doubling on \(D\).
\end{proof}

We now combine the doubling results above with the regularity theory for
optimal transport.

\begin{proposition}[Regularity of optimal transport under doubling measures, {\normalfont\cite{jhaveri2022regularity}}]
\label{lemma:caffarelli}
Let \(X\) and \(Y\) be open, bounded convex sets in \(\mathbb{R}^d\), and suppose that \(f\,dx\) and \(g\,dx\) are doubling measures concentrated on \(X\) and \(Y\), respectively. Let \(\mathcal{T}\) be the optimal transport map sending \(f\,dx\) to \(g\,dx\). Then
\[
\mathcal{T} \in C^\sigma(\overline{X})
\]
for some \(\sigma\in(0,1)\), where \(\sigma\) depends only on \(d\), the doubling constants of \(f\,dx\) and \(g\,dx\), and the inner and outer diameters of \(X\) and \(Y\).
\end{proposition}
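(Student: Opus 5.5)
This proposition is cited from \cite{jhaveri2022regularity}, so the plan is to reconstruct the main line of that argument rather than to give a self-contained derivation. By Brenier's theorem, the optimal map is $\mathcal T=\nabla\varphi$ for a convex potential $\varphi$. Since $X$ and $Y$ are convex and the measures are concentrated there, Caffarelli's theory shows that $\varphi$ (suitably extended) is an Alexandrov (Brenier) solution of the Monge--Ampère equation
\[
\det D^2\varphi=\frac{f}{g(\nabla\varphi)}.
\]
More precisely, the Monge--Ampère measure $\mu_\varphi$ satisfies $\mu_\varphi(\partial\varphi^{-1}(F))\approx$ the transport relation on sets $F$. The first step is therefore to establish this equivalence for doubling densities. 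Convexity of $Y$ ensures $\partial\varphi(X)\subset\overline Y$, and the mass balance $f(E)=g(\partial\varphi(E))$ holds for Borel $E$.

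The second step is geometric. We consider sections
\[
S_h(x_0)=\{x:\varphi(x)<\varphi(x_0)+\nabla\varphi(x_0)\cdot(x-x_0)+h\}.
\]
By John's lemma each section is comparable to an ellipsoid. Using the doubling property of $f\,dx$ on ellipsoids centered in the support, together with the doubling of $g\,dx$ applied to the images $\partial\varphi(S_h)$, which are themselves comparable to dual ellipsoids, we aim to show that $\varphi$ is strictly convex. We then show that sections are ``balanced'' (engulfing property): $S_{h/2}(x_0)$ contains a fixed dilate of $S_h(x_0)$ about $x_0$, with constants depending only on $d$ and the doubling constants. The central inequality will be the Alexandrov-type comparison
\[
h^d\approx |S_h|\,|\partial\varphi(S_h)|\approx |S_h|\,\frac{g(\partial\varphi(S_h))}{\text{(density average)}},
\]
where the unknown density averages cancel through doubling rather than through pointwise bounds $\lambda\le f,g\le\Lambda$. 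This is exactly where doubling replaces Caffarelli's two-sided density bounds.

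The third step iterates the engulfing property. It gives geometric decay $\operatorname{diam}S_{h/2^k}\le C\theta^k$ for some $\theta<1$, and hence $\operatorname{diam} S_h(x_0)\lesssim h^{\epsilon}$. This yields $\varphi\in C^{1,\sigma}$, i.e.\ $\mathcal T=\nabla\varphi\in C^\sigma$. Interior estimates come directly. Uniformity up to $\overline X$ uses the convexity of both domains, with constants controlled by the inner and outer diameters, via Caffarelli's boundary argument that sections centered near $\partial X$ remain comparable to ellipsoids inside $X$.

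I expect the main obstacle to be the strict convexity and boundary uniformity of step two. Doubling alone only controls measures of ellipsoids centered on the support, and one must rule out $\varphi$ being affine along segments reaching $\partial X$. For this I would first try Caffarelli's contradiction argument: an affine segment produces sections that are thin in one direction but whose images under $\partial\varphi$ are large, violating the doubling-based mass balance on $Y$.
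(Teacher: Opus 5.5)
The paper gives no argument for this proposition. It quotes the result from \cite{jhaveri2022regularity} and uses it as a black box, which is also what your opening sentence does. Your added reconstruction, however, would not close as written.

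The clearest failure is your third step, which inverts the direction of the estimate. Write $\ell_{x_0}$ for the supporting affine function of $\varphi$ at $x_0$. A bound $\operatorname{diam} S_h(x_0)\lesssim h^{\epsilon}$ says $\varphi(x)-\ell_{x_0}(x)\ge c|x-x_0|^{1/\epsilon}$. That is quantitative \emph{strict convexity}, a lower bound. H\"older continuity of $\nabla\varphi$ is the opposite, upper bound $\varphi(x)-\ell_{x_0}(x)\le C|x-x_0|^{1+\sigma}$, i.e.\ $S_h(x_0)\supseteq B_{ch^{1/(1+\sigma)}}(x_0)$, and shrinking sections do not give it. For example, $\varphi(x)=|x|+x^2$ on $\mathbb R$ has $\operatorname{diam} S_h(x_0)\le 2h^{1/2}$ for every $x_0$ and every supporting line, yet it is not $C^1$. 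The containment you call engulfing, $x_0+\delta\,(S_h(x_0)-x_0)\subseteq S_{h/2}(x_0)$, does point the right way. But it holds with $\delta=\tfrac12$ for every convex function, and iterating it bounds $S_{2^{-k}h}$ from \emph{below}; it does not bound its diameter from above. It yields $C^{1,\sigma}$, with $1+\sigma=\log 2/\log(1/\delta)$, only if $\delta>\tfrac12$. To close the step you need one of two things:
\begin{enumerate}
\item the improvement $\delta>\tfrac12$, typically obtained from strict convexity by compactness over affinely normalized sections (this is why doubling is phrased on ellipsoids); or
\item duality: $\varphi^*$ is the Brenier potential from $g\,dx$ to $f\,dx$ under the same hypotheses with $X,Y$ exchanged, and a decay $\varphi^*(p)-\varphi^*(p_0)-x_0\cdot(p-p_0)\ge c|p-p_0|^{\gamma}$, with $x_0\in\partial\varphi^*(p_0)$, dualizes to $\varphi(x)-\varphi(x_0)-p_0\cdot(x-x_0)\le C|x-x_0|^{\gamma/(\gamma-1)}$.
\end{enumerate}

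Second, the mechanism of your second step is not available. $|\partial\varphi(S_h)|$ is the Lebesgue Monge--Amp\`ere measure, whose density $f/g(\nabla\varphi)$ is neither bounded above nor below under doubling hypotheses. Rewriting it as $g(\partial\varphi(S_h))$ over an average of $g$ is a tautology. Doubling controls only ratios $\eta(\mathcal E)/\eta(\tfrac12\mathcal E)$ of one measure on concentric ellipsoids; it never compares an average of $f$ over $S_h\subset X$ with an average of $g$ over $\partial\varphi(S_h)\subset Y$. So nothing cancels, and no analogue of Caffarelli's $|S_h|\approx h^{d/2}$ follows. Also, the comparability of $\partial\varphi(S_h)$ with a dual ellipsoid is part of what has to be proved, not an input. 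Only $\partial(\varphi-\ell_{x_0})(S_h)\supseteq h\,(S_h-x_0)^{\circ}$ (the scaled polar body) comes for free. The reverse inclusion for $\partial\varphi(S_{h/2})$ needs $S_{h/2}$ to sit uniformly inside $S_h$, which is the balance property itself. What is needed instead is a scale-relative argument: couple $f(S)=g(\partial\varphi(S))$ with doubling of $f$ on ellipsoids comparable to $S_h(x_0)$ and doubling of $g$ on ellipsoids comparable to the polar body centered at $\nabla\varphi(x_0)$, comparing heights $h$ and $h/2$. Strict convexity and balance must be extracted from such comparisons. This includes sections centered near $\partial X$ that are cut by the boundary, which is where the convexity of $X$ and $Y$ actually enters and which your sketch only asserts.
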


\begin{proof}[Proof of \cref{thm:main1}]
The source measure \(\mu\) is the uniform probability measure on the bounded convex set \(X\), so it is doubling by \cref{thm:bounded}. For the target measure \(\nu\), the doubling property is provided by \cref{thm:elliptic_doubling} in the elliptic case, \cref{thm:parabolic_doubling} in the parabolic case, and \cref{thm:torus_doubling} in the torus case after identifying \(\mathbb T^d\) with the cube \(D\).

Thus both the source and target measures are doubling measures supported on bounded convex domains. Applying \cref{lemma:caffarelli}, we conclude that the optimal transport map \(\mathcal{T}\) belongs to \(C^\alpha(\overline X)\) for some \(\alpha\in(0,1]\). This is exactly the claimed regularity.
\end{proof}

\subsection{Proof of \cref{thm:main-excess-risk}}\label{sec:excess-risk}
We first decompose the excess risk into four terms, and then we estimate each term separately. The approximation term is controlled by the approximation theory of ReLU networks, while the generalization and discretization terms are bounded using empirical Wasserstein bounds. The optimization term is left as an abstract quantity that depends on the training procedure. 
Let \(\mathcal H\) be the hypothesis class induced by the neural parameterization. On the empirical measures, define the best-in-class empirical optimizer
\[
\mathcal{T}_N^\ast \in \arg\min_{S\in\mathcal H} R(S;\hat\mu_N,\hat\nu_N),
\]
and let \(\mathcal{T}_{\theta,N}\in\mathcal H\) denote the output of the practical training procedure. Then we introduce the following decomposition with four terms:

\begin{proposition}[Decomposition]
\label{prop:excess-risk-decomposition}
For every $N$,
\[
R(\mathcal{T}_{\theta,N};\mu,\nu)-R(\mathcal{T};\mu,\nu)
=
\varepsilon_{\mathrm{gen},N}
+\varepsilon_{\mathrm{opt},N}
+\varepsilon_{\mathrm{app},N}
+\varepsilon_{\mathrm{disc},N}.
\]
where \begin{align*}
\varepsilon_{\mathrm{gen},N} &\coloneqq R(\mathcal{T}_{\theta,N};\mu,\nu)-R(\mathcal{T}_{\theta,N};\hat\mu_N,\hat\nu_N),\\
\varepsilon_{\mathrm{opt},N} &\coloneqq R(\mathcal{T}_{\theta,N};\hat\mu_N,\hat\nu_N)-R(\mathcal{T}_N^\ast;\hat\mu_N,\hat\nu_N),\\
\varepsilon_{\mathrm{app},N} &\coloneqq R(\mathcal{T}_N^\ast;\hat\mu_N,\hat\nu_N)-R(\mathcal{T};\hat\mu_N,\hat\nu_N),\\
\varepsilon_{\mathrm{disc},N} &\coloneqq R(\mathcal{T};\hat\mu_N,\hat\nu_N)-R(\mathcal{T};\mu,\nu).
\end{align*}

\end{proposition}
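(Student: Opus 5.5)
The decomposition in \cref{prop:excess-risk-decomposition} is a purely algebraic telescoping identity, so I will prove it by direct cancellation; no probabilistic or analytic input is needed. The plan is to insert and subtract three intermediate quantities between \(R(\mathcal{T}_{\theta,N};\mu,\nu)\) and \(R(\mathcal{T};\mu,\nu)\):
\begin{itemize}
\item the empirical risk of the trained map, \(R(\mathcal{T}_{\theta,N};\hat\mu_N,\hat\nu_N)\);
\item the empirical risk of the best-in-class map, \(R(\mathcal{T}_N^\ast;\hat\mu_N,\hat\nu_N)\);
\item the empirical risk of the population-optimal map, \(R(\mathcal{T};\hat\mu_N,\hat\nu_N)\).
\end{itemize}

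Writing \(a_0=R(\mathcal{T}_{\theta,N};\mu,\nu)\), \(a_1=R(\mathcal{T}_{\theta,N};\hat\mu_N,\hat\nu_N)\), \(a_2=R(\mathcal{T}_N^\ast;\hat\mu_N,\hat\nu_N)\), \(a_3=R(\mathcal{T};\hat\mu_N,\hat\nu_N)\) and \(a_4=R(\mathcal{T};\mu,\nu)\), we have
\[
a_0-a_4=(a_0-a_1)+(a_1-a_2)+(a_2-a_3)+(a_3-a_4).
\]
The four brackets are, by definition, \(\varepsilon_{\mathrm{gen},N}\), \(\varepsilon_{\mathrm{opt},N}\), \(\varepsilon_{\mathrm{app},N}\) and \(\varepsilon_{\mathrm{disc},N}\).

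The only point that needs checking is that every term is a finite real number, so that the cancellation is legitimate.
\begin{itemize}
\item The population quantities are finite because \(\mu,\nu\) have bounded support. Any \(S\in\mathcal H\) is Lipschitz, so it maps the bounded set \(X\) into a bounded set, and \(S_\#\mu\in\mathcal P_2\).
\item \(\mathcal{T}\) is bounded because it takes values in \(\overline Y\).
\item The empirical quantities are \(W_2\) distances between finitely supported measures, hence finite.
\item The minimizer \(\mathcal{T}_N^\ast\) is assumed to exist as in its definition. If it did not, one would replace it by the infimum and the identity would persist.
\end{itemize}

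There is no real obstacle here. The only care needed is well-definedness (finiteness and existence of \(\mathcal{T}_N^\ast\)) rather than any estimate. The substantive work, namely the sign and size of each \(\varepsilon\)-term, is deferred to the subsequent bounds.
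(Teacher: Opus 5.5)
Your proof is correct and matches the paper's argument: the identity is the telescoping sum obtained by adding and subtracting the three empirical risks \(R(\mathcal{T}_{\theta,N};\hat\mu_N,\hat\nu_N)\), \(R(\mathcal{T}_N^\ast;\hat\mu_N,\hat\nu_N)\) and \(R(\mathcal{T};\hat\mu_N,\hat\nu_N)\). Your finiteness check (bounded supports and finitely supported empirical measures) is a harmless addition that the paper leaves implicit.
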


\begin{proof}
This is the telescoping identity obtained by adding and subtracting $R(\mathcal{T}_{\theta,N};\hat\mu_N,\hat\nu_N)$, $R(\mathcal{T}_N^\ast;\hat\mu_N,\hat\nu_N)$, and $R(\mathcal{T};\hat\mu_N,\hat\nu_N)$.
\end{proof}

The four terms have the following roles:
$\varepsilon_{\mathrm{gen},N}$ measures the generalization from empirical to
population risk for the neural network $\mathcal{T}_\theta$,
$\varepsilon_{\mathrm{opt},N}$ records the optimization error of the training
procedure, $\varepsilon_{\mathrm{app},N}$ is the approximation error of the
hypothesis class, and $\varepsilon_{\mathrm{disc},N}$, with ``disc'' short for
discretization, captures the discretization error caused by replacing the
population measures $\mu,\nu$ with their empirical counterparts
$\hat\mu_N,\hat\nu_N$ when the map is fixed at the population map
$\mathcal{T}$. We begin with the approximation term.

\begin{lemma}[Approximation term]
\label{prop:approximation-error}
For every $S\in\mathcal H$,
\[
\varepsilon_{\mathrm{app},N}\le \Bigl(\int_X \|\mathcal{T}(x)-S(x)\|^2\,d\hat\mu_N(x)\Bigr)^{1/2}.
\]
In particular, if $\|\mathcal{T}-S\|_{L^\infty(X)}\le \delta_N$, then $\varepsilon_{\mathrm{app},N}\le \delta_N$.
\end{lemma}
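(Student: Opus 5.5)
The plan is to combine the minimality of \(\mathcal{T}_N^\ast\) over \(\mathcal H\) with the triangle inequality for \(W_2\), and then to bound the distance between two pushforwards of the same empirical measure by the obvious coupling.

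\emph{First step.} Fix any \(S\in\mathcal H\). Since \(\mathcal{T}_N^\ast\) minimizes \(R(\cdot;\hat\mu_N,\hat\nu_N)\) over \(\mathcal H\), we have \(R(\mathcal{T}_N^\ast;\hat\mu_N,\hat\nu_N)\le R(S;\hat\mu_N,\hat\nu_N)\). Therefore
\[
\varepsilon_{\mathrm{app},N}\le W_2(S_{\#}\hat\mu_N,\hat\nu_N)-W_2(\mathcal{T}_{\#}\hat\mu_N,\hat\nu_N).
\]

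\emph{Second step.} Apply the triangle inequality for \(W_2\) on \(\mathcal P_2(\mathbb R^d)\). All the measures involved are finitely supported, so they lie in \(\mathcal P_2\). This gives
\[
W_2(S_{\#}\hat\mu_N,\hat\nu_N)-W_2(\mathcal{T}_{\#}\hat\mu_N,\hat\nu_N)\le W_2(S_{\#}\hat\mu_N,\mathcal{T}_{\#}\hat\mu_N).
\]

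\emph{Third step.} Bound the right-hand side using the coupling \(\pi:=(\mathcal{T},S)_{\#}\hat\mu_N\), that is, \(\pi=\frac1N\sum_i\delta_{(\mathcal{T}(x_i),S(x_i))}\). Its marginals are \(\mathcal{T}_{\#}\hat\mu_N\) and \(S_{\#}\hat\mu_N\). By the definition of \(W_2\) as an infimum over couplings,
\[
W_2(S_{\#}\hat\mu_N,\mathcal{T}_{\#}\hat\mu_N)^2\le\int\|y-z\|^2\,d\pi(y,z)=\int_X\|\mathcal{T}(x)-S(x)\|^2\,d\hat\mu_N(x).
\]
Taking square roots yields the first claim.

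\emph{Second claim.} This follows immediately. The sample points \(x_i\) lie in \(X\), so each integrand value is at most \(\delta_N^2\), and \(\hat\mu_N\) is a probability measure. Hence the integral is at most \(\delta_N^2\), and its square root is at most \(\delta_N\).

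\emph{Main obstacle.} The only subtle point is a technical one. Pointwise evaluation at the samples needs \(\mathcal T\) to be defined everywhere, rather than only \(\mu\)-a.e., so that the bound \(\|\mathcal{T}-S\|_{L^\infty}\) controls its values at the \(x_i\). Regularity of \(\mathcal T\) resolves this: \(\mathcal T\) is continuous on \(\overline X\), so we use its continuous representative, and the sup-norm bound then holds at every point. If one prefers to stay with essential suprema, the points \(x_i\) avoid any fixed \(\mu\)-null set almost surely, which suffices for the expectation-level statements used later.
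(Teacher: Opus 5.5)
Your proposal is correct and matches the paper's proof essentially step for step: minimality of $\mathcal{T}_N^\ast$ over $\mathcal H$, then the reverse triangle inequality for $W_2$, then the coupling $(\mathcal{T},S)_\sharp\hat\mu_N$. Your remark about using the continuous representative of $\mathcal{T}$, or noting that the samples almost surely avoid $\mu$-null sets, is a sensible clarification that the paper leaves implicit.
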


\begin{proof}
By the minimizing property of $\mathcal{T}_N^\ast$, for every $S\in\mathcal H$,
\[
\varepsilon_{\mathrm{app},N}
=
R(\mathcal{T}_N^\ast;\hat\mu_N,\hat\nu_N)-R(\mathcal{T};\hat\mu_N,\hat\nu_N)
\le
R(S;\hat\mu_N,\hat\nu_N)-R(\mathcal{T};\hat\mu_N,\hat\nu_N).
\]
Using the reverse triangle inequality for $W_2$, we obtain
\[
R(S;\hat\mu_N,\hat\nu_N)-R(\mathcal{T};\hat\mu_N,\hat\nu_N)
=
W_2(S_\sharp\hat\mu_N,\hat\nu_N)-W_2(\mathcal{T}_\sharp\hat\mu_N,\hat\nu_N)
\le
W_2(S_\sharp\hat\mu_N,\mathcal{T}_\sharp\hat\mu_N).
\]
Finally, the coupling $(S,\mathcal{T})_\sharp\hat\mu_N$ gives
\[
W_2(S_\sharp\hat\mu_N,\mathcal{T}_\sharp\hat\mu_N)
\le
\Bigl(\int_X \|\mathcal{T}(x)-S(x)\|^2\,d\hat\mu_N(x)\Bigr)^{1/2}.
\]
This proves the stated estimate, and the $L^\infty$ consequence is immediate.
\end{proof}

We next invoke a quantitative approximation theorem for H\"older functions.

\begin{proposition}[Approximation of H\"older functions by ReLU networks {\normalfont\cite{shen2020neurons}}]
\label{lem:holder-approximation}
Let $f$ be an $\alpha$-H\"older continuous function with a H\"older constant $\lambda$ on a set $E\subseteq[-B,B]^d$. For arbitrary $L,W\in\mathbb N^+$, there exists a ReLU feedforward neural network $\phi$ with width $3^{d+3}\max\{d\lfloor W^{1/d}\rfloor,\,W+1\}$ and depth $12L+14+2d$ such that
\[
\|f-\phi\|_{L^\infty(E)}\le 19\sqrt d\,\lambda(2B)^\alpha W^{-2\alpha/d}L^{-2\alpha/d}.
\]
\end{proposition}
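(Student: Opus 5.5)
The plan is to follow the constructive strategy of \cite{shen2020neurons}: replace $f$ by a piecewise-constant approximant on a uniform grid, build a ReLU network that reproduces this approximant outside a thin ``trifling'' region, and then remove the trifling region with a median-of-shifts trick. First I reduce to the full cube. For scalar $f$, the McShane-type extension $\tilde f(x)=\inf_{y\in E}\{f(y)+\lambda|x-y|^\alpha\}$ is $\alpha$-H\"older on $[-B,B]^d$ with the same constant $\lambda$, because $|x-y|^\alpha$ is a metric for $\alpha\in(0,1]$. So it suffices to treat $E=[-B,B]^d$. After the affine change of variables $x\mapsto (x+B)/(2B)$ the domain becomes $[0,1]^d$ and the H\"older constant becomes $\lambda(2B)^\alpha$, which accounts for that factor in the bound. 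For the vector-valued maps $\mathcal T$ used later, I apply the result componentwise and absorb the resulting constant into $C_{\alpha,d,B,\mathcal T}$.

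\textbf{Step 1: a piecewise-constant approximant.} Set $K=\lfloor W^{1/d}\rfloor^2\lfloor L^{2/d}\rfloor$, up to constants, and split $[0,1]^d$ into $K^d$ cubes $Q_\beta$ indexed by $\beta\in\{0,\dots,K-1\}^d$. On each cube, $f$ differs from its value at the corner $x_\beta$ by at most $\lambda(\sqrt d/K)^\alpha$. Since $K^{-\alpha}\sim W^{-2\alpha/d}L^{-2\alpha/d}$, this gives the claimed rate, with $\sqrt d$ bounding $(\sqrt d)^\alpha$.

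\textbf{Step 2: computing the approximant with a network.} Away from the boundaries of the cells, i.e.\ outside a trifling region $\Omega$ of small width $\delta$, a ReLU network of width $O(W)$ and depth $O(L)$ computes the index $\beta$ coordinatewise. It does this with a ReLU approximation of the step function $x\mapsto\lfloor Kx\rfloor$, built from $O(W)$ pieces per layer composed over $O(L)$ layers. The multi-index $\beta$ is then encoded as a single integer $\sum_j \beta_j K^{j-1}$. A second subnetwork maps each integer index to the quantized value $f(x_\beta)$, rounded to precision $K^{-\alpha}$ so that only finitely many bits are needed. Since there are $K^d\sim W^2L^2$ points, this is a point-fitting problem solved by the bit-extraction technique: the values are stored in the binary digits of a few parameters and read out by a deep narrow network. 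Depth $O(L)$ together with width $O(W)$ suffices to fit $W^2L^2$ points. This counting is what produces the squared rate $W^{-2/d}L^{-2/d}$ instead of $W^{-1/d}L^{-1/d}$, and it explains the widths $d\lfloor W^{1/d}\rfloor$ and $W+1$ and the depth $12L+14+2d$.

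\textbf{Step 3: removing the trifling region.} I take $3^d$ copies of the Step 2 network, evaluated at inputs shifted by $\delta e_i$ in every combination of coordinate directions. Each point of $[0,1]^d$ lies outside the shifted trifling region for all but a controlled minority of these copies in each coordinate. The global approximant is then the coordinatewise iterated middle value, $\mathrm{mid}(a,b,c)$, which is exactly representable by ReLU. The middle of three values, two of which are accurate, is itself accurate, because $f$ is continuous and the shifts are small. Iterating over the $d$ coordinates gives the $3^d$ width factor and the extra $2d$ layers, and the final error is at most $19\sqrt d\,\lambda\,K^{-\alpha}$ on all of $[0,1]^d$.

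I expect Step 2 to be the main obstacle. Getting the sharp rate $W^{-2\alpha/d}L^{-2\alpha/d}$ with the exact stated architecture requires the bit-extraction construction and careful bookkeeping of width and depth. The other steps, the extension, the rescaling and the median trick, are routine in comparison. Since the paper cites \cite{shen2020neurons}, in practice I would invoke their Theorem~1.1 directly after the extension and rescaling reductions of the first paragraph.
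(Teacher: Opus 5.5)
Your proposal is correct and agrees with the paper, which gives no proof of its own and simply imports this $L^\infty$ statement on a general set $E\subseteq[-B,B]^d$ from \cite{shen2020neurons}. Your reduction is exactly how that general-domain form follows from their unit-cube H\"older theorem: the snowflake McShane extension $\tilde f(x)=\inf_{y\in E}\{f(y)+\lambda|x-y|^\alpha\}$, followed by the affine rescaling to $[0,1]^d$, which turns $\lambda$ into $\lambda(2B)^\alpha$ and is absorbed into the first layer; your grid, bit-extraction and iterated-middle-value sketch is an accurate, if informal, summary of the cited construction.
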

The preceding proposition gives a general ReLU approximation result for
H\"older functions. Applying it to our approximation term yields the following approximation rate.
\begin{lemma}[Approximation rate]
\label{prop:approximation-rate}
Assume that $\mathcal{T}\in C^\alpha(\overline X)$. Let $L,W\in\mathbb N^+$, and suppose that $\mathcal H$ contains the architectures from \cref{lem:holder-approximation}. Then there exists $S_{W,L}\in\mathcal H$ such that
\[
\|\mathcal{T}-S_{W,L}\|_{L^\infty(\operatorname{supp}\hat\mu_N)}
\le
C_{\alpha,d,B,\mathcal{T}}\,W^{-2\alpha/d}L^{-2\alpha/d}.
\]
Consequently,
\[
\varepsilon_{\mathrm{app},N}\le C_{\alpha,d,B,\mathcal{T}}\,W^{-2\alpha/d}L^{-2\alpha/d}.
\]
\end{lemma}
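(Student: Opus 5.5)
The argument combines \cref{lem:holder-approximation} with \cref{prop:approximation-error}. Since $\mathcal{T}\in C^\alpha(\overline X)$ with values in $\mathbb R^d$, I will approximate each coordinate separately. Write $\mathcal{T}=(\mathcal{T}^1,\dots,\mathcal{T}^d)$ and let $\lambda:=[\mathcal{T}]_{C^\alpha(\overline X)}$. Each component $\mathcal{T}^k$ is then $\alpha$-H\"older on $E:=\overline X\subseteq[-B,B]^d$ with constant at most $\lambda$. Applying \cref{lem:holder-approximation} to $\mathcal{T}^k$ produces a ReLU network $\phi_k$ of width $3^{d+3}\max\{d\lfloor W^{1/d}\rfloor,W+1\}$ and depth $12L+14+2d$ such that
\[
\|\mathcal{T}^k-\phi_k\|_{L^\infty(\overline X)}\le 19\sqrt d\,\lambda(2B)^\alpha W^{-2\alpha/d}L^{-2\alpha/d}.
\]

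I then take $S_{W,L}:=(\phi_1,\dots,\phi_d)$. The hypothesis on $\mathcal H$ says it contains the architectures of \cref{lem:holder-approximation}, which I read as including vector-valued outputs. If one insists on scalar networks, the $d$ coordinate networks can be stacked in parallel; this multiplies the width by $d$, and that factor can be absorbed either into the class or into the constant. Taking Euclidean norms pointwise gives
\[
\|\mathcal{T}-S_{W,L}\|_{L^\infty(\overline X)}\le \sqrt d\cdot 19\sqrt d\,\lambda(2B)^\alpha W^{-2\alpha/d}L^{-2\alpha/d},
\]
so $C_{\alpha,d,B,\mathcal{T}}:=19\,d\,\lambda(2B)^\alpha$ works.

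Since $\operatorname{supp}\hat\mu_N\subset X$ almost surely (the samples are drawn from $\mu$, which lives on $X$), the sup-norm bound holds in particular on $\operatorname{supp}\hat\mu_N$. This is the first claim.

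For the consequence, I will invoke \cref{prop:approximation-error} with $S=S_{W,L}\in\mathcal H$ and $\delta_N:=C_{\alpha,d,B,\mathcal{T}}W^{-2\alpha/d}L^{-2\alpha/d}$. The integral there is taken against $\hat\mu_N$, so only the values on $\operatorname{supp}\hat\mu_N$ matter, and the integral is bounded by $\delta_N^2$. This gives $\varepsilon_{\mathrm{app},N}\le\delta_N$.

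The only delicate point is the passage from scalar to vector-valued approximation, namely whether the architecture stated in the theorem covers $d$ outputs. I would handle this by interpreting the width specification per output coordinate, or by noting that the parallel stacking changes only the constant. A minor related check is that the H\"older constant of each coordinate is bounded by that of the full map, which is immediate because $|\mathcal{T}^k(x)-\mathcal{T}^k(y)|\le\|\mathcal{T}(x)-\mathcal{T}(y)\|$.
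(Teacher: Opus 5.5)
Your proposal is correct and follows the paper's proof, which applies \cref{lem:holder-approximation} to $\mathcal{T}$ and then invokes \cref{prop:approximation-error} with the resulting sup-norm bound. You also make explicit two points the paper leaves implicit: the coordinatewise treatment of the vector-valued map (reading the hypothesis on $\mathcal H$ as allowing $d$ parallel scalar networks, which gives $C_{\alpha,d,B,\mathcal{T}}=19\,d\,\lambda(2B)^\alpha$), and the fact that $\operatorname{supp}\hat\mu_N\subset X$ almost surely.
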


\begin{proof}
Apply \cref{lem:holder-approximation} to $\mathcal{T}$ and then use \cref{prop:approximation-error}.
\end{proof}

We next control the discretization term.

\begin{lemma}[Discretization term]
\label{prop:discretization-error}
Assume that $\mathcal{T}_\sharp\mu=\nu$. Then
\[
\varepsilon_{\mathrm{disc},N}=W_2(\mathcal{T}_\sharp\hat\mu_N,\hat\nu_N)
\le
W_2(\mathcal{T}_\sharp\hat\mu_N,\nu)+W_2(\nu,\hat\nu_N).
\]
Moreover, $\mathcal{T}_\sharp\hat\mu_N$ has the same law as an empirical measure of $\nu$, and therefore
\[
\mathbb E[\varepsilon_{\mathrm{disc},N}]\le 2\,\mathbb E[W_2(\nu,\hat\nu_N)].
\]
\end{lemma}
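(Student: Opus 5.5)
The plan is to first compute the discretization term explicitly, then bound it with two triangle inequalities, and finally identify the law of \(\mathcal{T}_\sharp\hat\mu_N\) to take expectations.

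\textbf{Step 1: the population risk of \(\mathcal T\) vanishes.} Since \(\mathcal{T}_\sharp\mu=\nu\), we have \(R(\mathcal T;\mu,\nu)=W_2(\mathcal T_\sharp\mu,\nu)=W_2(\nu,\nu)=0\). By the definition in \cref{prop:excess-risk-decomposition}, this gives
\[
\varepsilon_{\mathrm{disc},N}=R(\mathcal T;\hat\mu_N,\hat\nu_N)=W_2(\mathcal T_\sharp\hat\mu_N,\hat\nu_N).
\]

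\textbf{Step 2: the pathwise bound.} Insert \(\nu\) as an intermediate measure and apply the triangle inequality for \(W_2\) on \(\mathcal P_2(Y)\). All measures involved have bounded support, so each \(W_2\) is finite. This yields the first displayed inequality of the lemma.

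\textbf{Step 3: the law of \(\mathcal T_\sharp\hat\mu_N\).} Pushforward commutes with finite sums of Dirac masses, so
\[
\mathcal T_\sharp\hat\mu_N=\frac1N\sum_{i=1}^N\delta_{\mathcal T(x_i)}.
\]
The \(x_i\) are i.i.d.\ with law \(\mu\) and \(\mathcal T\) is measurable, being H\"older continuous. Hence the points \(z_i:=\mathcal T(x_i)\) are i.i.d.\ with law \(\mathcal T_\sharp\mu=\nu\). Therefore \(\mathcal T_\sharp\hat\mu_N\) is distributed exactly as an empirical measure \(\hat\nu_N'\) of \(\nu\) built from \(N\) samples.

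\textbf{Step 4: taking expectations.} The quantity \(W_2(\mathcal T_\sharp\hat\mu_N,\nu)\) is a measurable function of \((z_1,\dots,z_N)\) alone. By equality in law, \(\mathbb E[W_2(\mathcal T_\sharp\hat\mu_N,\nu)]=\mathbb E[W_2(\hat\nu_N',\nu)]=\mathbb E[W_2(\nu,\hat\nu_N)]\). Adding the second term of Step 2 gives the bound \(2\,\mathbb E[W_2(\nu,\hat\nu_N)]\).

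No independence between the \(x_i\) and \(y_i\) is needed, since only linearity of expectation is used. The main subtlety is measurability: \(W_2(\cdot,\nu)\) is continuous in the Dirac locations, hence Borel, so the expectations are well defined and the equality in law transfers. Beyond that, the proof is routine.
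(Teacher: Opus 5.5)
Your proposal is correct and follows the paper's proof essentially step for step: you use $\mathcal{T}_\sharp\mu=\nu$ to get $R(\mathcal{T};\mu,\nu)=0$, apply the triangle inequality through $\nu$, and observe that $\mathcal{T}_\sharp\hat\mu_N=\frac1N\sum_{i}\delta_{\mathcal{T}(x_i)}$ has the law of an empirical measure of $\nu$, so its expected $W_2$-distance to $\nu$ equals $\mathbb{E}[W_2(\nu,\hat\nu_N)]$. Your remarks on the finiteness of each $W_2$, the measurability of $W_2(\cdot,\nu)$ in the sample points, and the fact that no independence between the $x_i$ and $y_i$ is needed are correct details that the paper leaves implicit.
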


\begin{proof}
Since $\mathcal{T}_\sharp\mu=\nu$, we have
\[
R(\mathcal{T};\mu,\nu)=W_2(\mathcal{T}_\sharp\mu,\nu)=0.
\]
Therefore, by the triangle inequality,
\[
\varepsilon_{\mathrm{disc},N}
=
R(\mathcal{T};\hat\mu_N,\hat\nu_N)
=
W_2(\mathcal{T}_\sharp\hat\mu_N,\hat\nu_N)
\le
W_2(\mathcal{T}_\sharp\hat\mu_N,\nu)+W_2(\nu,\hat\nu_N).
\]
For the expectation estimate, note that $\mathcal{T}_\sharp\hat\mu_N=\frac1N\sum_{i=1}^N\delta_{\mathcal{T}(X_i)},$ and the random variables $\mathcal{T}(X_i)$ are i.i.d. with law $\nu$. Let $\hat\nu_N'$ be an empirical measure built from an i.i.d. sample of $\nu$. Then $\mathcal{T}_\sharp\hat\mu_N$ has the same law as $\hat\nu_N'$, and hence
\[
\mathbb E[W_2(\mathcal{T}_\sharp\hat\mu_N,\nu)]
=
\mathbb E[W_2(\hat\nu_N',\nu)]
=
\mathbb E[W_2(\hat\nu_N,\nu)].
\]
Therefore,
\[
\mathbb E[\varepsilon_{\mathrm{disc},N}]
\le
\mathbb E[W_2(\mathcal{T}_\sharp\hat\mu_N,\nu)]
+
\mathbb E[W_2(\nu,\hat\nu_N)]
=
2\,\mathbb E[W_2(\nu,\hat\nu_N)].
\]
\end{proof}

We then turn to the generalization term. The key input is a uniform
Lipschitz bound for the hypothesis class. We first record a standard
estimate showing that bounded network parameters, for a fixed
architecture, imply such a uniform Lipschitz bound. This type of
bounded-parameter or uniform
Lipschitz bound hypothesis class is standard in
generalization analyses of one-step generative models, for instance,
\cite{chen2022distributiongan,chakraborty2025generative}.
\begin{lemma}[Lipschitz bound for ReLU networks]
\label{lem:relu-lipschitz}
Let $S:\mathbb R^{d_0}\to\mathbb R^{d_L}$ be a fixed-architecture
feedforward ReLU network. Suppose that for $\ell=1,2\dots L$, the network's parameters satisfy $\|A_\ell\|,\ \|b_\ell\|\le \kappa,$
for some $\kappa>0$. Then $S$ is Lipschitz. That is
\[
\operatorname{Lip}(S)\le L_{\mathcal H}.
\]
\end{lemma}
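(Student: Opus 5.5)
We prove the bound by induction over the layers, combining the fact that the ReLU activation is $1$-Lipschitz with the operator-norm bounds on the weight matrices. Write the network as
\[
S=A_L\circ\sigma\circ\Phi_{L-1}\circ\sigma\circ\cdots\circ\sigma\circ\Phi_1,
\qquad \Phi_\ell(z)=A_\ell z+b_\ell,
\]
where $\sigma$ acts coordinatewise as $\sigma(t)=\max\{t,0\}$; the last layer may also carry a bias $b_L$. The plan is to bound the Lipschitz constant of each building block separately and then use the fact that Lipschitz constants are submultiplicative under composition.

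First, we check that each affine map $\Phi_\ell$ is Lipschitz with constant $\|A_\ell\|$, the operator norm induced by the Euclidean norm. This is immediate from
\[
\Phi_\ell(z)-\Phi_\ell(z')=A_\ell(z-z'),
\]
so the bias $b_\ell$ cancels and plays no role. If the hypothesis $\|A_\ell\|\le\kappa$ is meant in the Frobenius norm, we use $\|A_\ell\|_{\mathrm{op}}\le\|A_\ell\|_F$, so the same bound holds. Next, the coordinatewise ReLU is $1$-Lipschitz in the Euclidean norm, because $|\sigma(s)-\sigma(t)|\le|s-t|$ for each scalar coordinate; squaring and summing over coordinates gives $\|\sigma(z)-\sigma(z')\|\le\|z-z'\|$.

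Composing the blocks then gives
\[
\operatorname{Lip}(S)\le\prod_{\ell=1}^L\|A_\ell\|\le\kappa^L,
\]
and we set $L_{\mathcal H}:=\kappa^L$. This constant depends only on the fixed architecture (through the depth $L$) and on the parameter bound $\kappa$. It is therefore uniform over the whole bounded-parameter class, which is exactly what the generalization estimate in \cref{thm:main-excess-risk} requires.

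The main point to handle carefully is that the statement writes $\|A_\ell\|$ without specifying the norm. We will state explicitly that we interpret it as the operator norm, or else as any norm dominating it, such as the Frobenius norm. If an entrywise max-norm is intended, we instead use the estimate $\|A_\ell\|_{\mathrm{op}}\le\sqrt{n_\ell n_{\ell-1}}\,\max_{ij}|(A_\ell)_{ij}|$, where $n_\ell$ is the width of layer $\ell$. This gives
\[
L_{\mathcal H}=\kappa^L\prod_{\ell=1}^L\sqrt{n_\ell n_{\ell-1}},
\]
which is still finite and independent of $N$ because the width is fixed by the architecture. The bias bound $\|b_\ell\|\le\kappa$ is not needed for the Lipschitz estimate itself; it matters only for uniform boundedness of the class on $[-B,B]^d$.
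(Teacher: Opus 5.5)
Your proof is correct and follows the same route as the paper: each affine layer is Lipschitz with constant $\|A_\ell\|$, the ReLU is $1$-Lipschitz, and composing the layers multiplies these constants. You also make the constant explicit, $L_{\mathcal H}=\kappa^L$ (with the extra factors $\sqrt{n_\ell n_{\ell-1}}$ under an entrywise norm), and you handle the unspecified matrix norm carefully; the paper leaves both of these points implicit.
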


\begin{proof}
Each affine map is Lipschitz, and the ReLU activation is
$1$-Lipschitz. Since the architecture is fixed and the weights are
uniformly bounded, the product of the layerwise Lipschitz constants is
uniformly bounded. Hence $S$ is Lipschitz.
\end{proof}

Although $L_{\mathcal H}$ may be large in practice, making the estimate
loose at the level of constants, this does not affect the convergence rate in
$N$ since the hypothesis class is fixed and $L_{\mathcal H}$ is
independent of $N$. Moreover, $L_{\mathcal H}$ can often be controlled
by weight clipping \cite{arjovsky2017wgan}, spectral normalization \cite{miyato2018spectralnormalizationgenerativeadversarial} or
near-identity parameterizations \cite{shen2026twostepdiffusion}.
\begin{lemma}[Generalization term]
\label{prop:generalization-error}
Assume that $\operatorname{Lip}(S)\le L_{\mathcal H}$ for all $S\in\mathcal H$. Then
\[
\varepsilon_{\mathrm{gen},N}\le L_{\mathcal H}W_2(\mu,\hat\mu_N)+W_2(\nu,\hat\nu_N).
\]
\end{lemma}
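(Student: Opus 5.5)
We bound the generalization term by a chain of triangle inequalities for \(W_2\), with the hypothesis-class Lipschitz bound used to move the source-side empirical error through the pushforward. Write \(S=\mathcal{T}_{\theta,N}\in\mathcal H\), so that
\[
\varepsilon_{\mathrm{gen},N}=W_2(S_\sharp\mu,\nu)-W_2(S_\sharp\hat\mu_N,\hat\nu_N).
\]

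First, apply the triangle inequality for \(W_2\) twice, inserting the intermediate measures \(S_\sharp\hat\mu_N\) and \(\hat\nu_N\):
\[
W_2(S_\sharp\mu,\nu)\le W_2(S_\sharp\mu,S_\sharp\hat\mu_N)+W_2(S_\sharp\hat\mu_N,\hat\nu_N)+W_2(\hat\nu_N,\nu).
\]
Subtracting \(W_2(S_\sharp\hat\mu_N,\hat\nu_N)\) from both sides leaves
\[
\varepsilon_{\mathrm{gen},N}\le W_2(S_\sharp\mu,S_\sharp\hat\mu_N)+W_2(\nu,\hat\nu_N).
\]
All measures involved have finite second moments, since \(\mu\) and \(\nu\) are supported in bounded sets and \(S\) is Lipschitz. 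Hence these \(W_2\) distances are finite and the subtraction is legitimate.

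Second, bound the pushforward term by the Lipschitz constant. Let \(\gamma\) be an optimal coupling of \(\mu\) and \(\hat\mu_N\). Then \((S\times S)_\sharp\gamma\) is a coupling of \(S_\sharp\mu\) and \(S_\sharp\hat\mu_N\), and therefore
\[
W_2(S_\sharp\mu,S_\sharp\hat\mu_N)^2\le\int\|S(x)-S(x')\|^2\,d\gamma(x,x')\le L_{\mathcal H}^2\int\|x-x'\|^2\,d\gamma=L_{\mathcal H}^2W_2(\mu,\hat\mu_N)^2.
\]
Taking square roots and combining with the first step gives the claimed bound.

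The only point needing care is that \(S=\mathcal{T}_{\theta,N}\) is data-dependent, since it is trained on the same samples. The argument above is pathwise, however: it holds for every fixed \(S\in\mathcal H\) with \(\operatorname{Lip}(S)\le L_{\mathcal H}\), and so in particular for the trained map. No uniform-convergence argument is needed, because the uniform Lipschitz bound replaces it. This is the main conceptual step; the remaining computations are routine.
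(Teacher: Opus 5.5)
Your proof is correct and follows essentially the same route as the paper: the triangle inequality for $W_2$ isolates $W_2(S_\sharp\mu,S_\sharp\hat\mu_N)+W_2(\nu,\hat\nu_N)$, and the Lipschitz bound then controls the pushforward term. The only difference is that you spell out the coupling $(S\times S)_\sharp\gamma$ behind the pushforward estimate and note that the argument holds pathwise for the data-dependent map; the paper takes both points for granted.
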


\begin{proof}
Using the elementary estimate
\[
|W_2((\mathcal{T}_{\theta,N})_\sharp\mu,\nu)-W_2((\mathcal{T}_{\theta,N})_\sharp\hat\mu_N,\hat\nu_N)|
\le
W_2((\mathcal{T}_{\theta,N})_\sharp\mu,(\mathcal{T}_{\theta,N})_\sharp\hat\mu_N)+W_2(\nu,\hat\nu_N),
\]
we obtain
\[
\varepsilon_{\mathrm{gen},N}
\le
W_2((\mathcal{T}_{\theta,N})_\sharp\mu,(\mathcal{T}_{\theta,N})_\sharp\hat\mu_N)
+
W_2(\nu,\hat\nu_N).
\]
Since pushforward by an $L_{\mathcal H}$-Lipschitz map is $L_{\mathcal H}$-Lipschitz in $W_2$,
\[
W_2((\mathcal{T}_{\theta,N})_\sharp\mu,(\mathcal{T}_{\theta,N})_\sharp\hat\mu_N)
\le
\operatorname{Lip}(\mathcal{T}_{\theta,N})\,W_2(\mu,\hat\mu_N)
\le
L_{\mathcal H}W_2(\mu,\hat\mu_N).
\]
This proves the claim.
\end{proof}

We are now ready to prove \cref{thm:main-excess-risk}.

\begin{proof}[Proof of \cref{thm:main-excess-risk}]
By \cref{prop:excess-risk-decomposition},
\[
R(\mathcal{T}_{\theta,N};\mu,\nu)-R(\mathcal{T};\mu,\nu)
=
\varepsilon_{\mathrm{gen},N}
+\varepsilon_{\mathrm{opt},N}
+\varepsilon_{\mathrm{app},N}
+\varepsilon_{\mathrm{disc},N}.
\]
Taking expectations and using \cref{prop:generalization-error,prop:approximation-rate,prop:discretization-error}, we obtain
\begin{align*}
\mathbb E\!\left[R(\mathcal{T}_{\theta,N};\mu,\nu)-R(\mathcal{T};\mu,\nu)\right]
\le {}&
L_{\mathcal H}\,\mathbb E[W_2(\mu,\hat\mu_N)]
+
\mathbb E[W_2(\nu,\hat\nu_N)] \\
&+
\mathbb E[\varepsilon_{\mathrm{opt},N}]
+
C_{\alpha,d,B,\mathcal{T}}\,W^{-2\alpha/d}L^{-2\alpha/d}
+
2\,\mathbb E[W_2(\nu,\hat\nu_N)].
\end{align*}
Collecting the two terms involving $W_2(\nu,\hat\nu_N)$ yields the claimed bound.
\end{proof}
Then we can deduce the sample-size dependence for fixed architectures.
\begin{proof}[Proof of \cref{cor:fixed-arch-rate}]
Choose $W_\eta$ and $L_\eta$ so large that $C_{\alpha,d,B,\mathcal{T}}W_\eta^{-2\alpha/d}L_\eta^{-2\alpha/d}\le\eta$, and let $\mathcal H_\eta$ be the corresponding hypothesis class. This is possible because $W^{-2\alpha/d}L^{-2\alpha/d}\to0$ as $W,L\to\infty$. Applying \cref{thm:main-excess-risk} with the fixed class $\mathcal H_\eta$ and $\mathbb E[\varepsilon_{\mathrm{opt},N}]\le\eta$, we reduce the problem to bounding $\mathrm{Stat}_N(\mathcal H_\eta)$.

Since $\mu,\nu$ is supported in the bounded set $X\subset[-B,B]^d$, it belongs to $\mathcal P_q(\mathbb R^d)$ for every $q>0$. Therefore the empirical Wasserstein bounds in \cite{fournier2015wasserstein} with $p=2$ imply (The following bound also holds for $\nu$)
\[
\mathbb E[W_2^2(\mu,\hat\mu_N)]
\le
C_{\eta}
\begin{cases}
N^{-1/2}, & d<4,\\
N^{-1/2}\log(1+N), & d=4,\\
N^{-2/d}, & d>4.
\end{cases}
\]
By Jensen's inequality $\mathbb E[W_2(\mu,\hat\mu_N)] \le \bigl(\mathbb E[W_2^2(\mu,\hat\mu_N)]\bigr)^{1/2}$, 
and substituting these bounds into $\mathrm{Stat}_N(\mathcal H_\eta)$ gives the stated estimate after absorbing constants into $C_{\eta}$.
\end{proof}

\subsection{Out-of-distribution regime}\label{sec:ood}

We close the section with a simple robustness estimate under target shift. The source measure $\mu$ is kept fixed, while the target measure $\nu$ is replaced by $\nu_{1}$. The resulting out-of-distribution (OOD) excess risk is $R(\mathcal{T}_{\theta,N};\mu,\nu_{1})-R(\mathcal{T};\mu,\nu)$.

\begin{lemma}[OOD bound under target shift]
\label{prop:ood-bound}
Assume that the source distribution $\mu$ remains unchanged and the target distribution shifts from $\nu$ to $\nu_{1}$. Then
\[
R(\mathcal{T}_{\theta,N};\mu,\nu_{1})-R(\mathcal{T};\mu,\nu)
\le
\bigl(R(\mathcal{T}_{\theta,N};\mu,\nu)-R(\mathcal{T};\mu,\nu)\bigr)+W_2(\nu,\nu_{1}).
\]
Consequently, under the assumptions of \cref{thm:main-excess-risk},
\[
\mathbb E\!\left[R(\mathcal{T}_{\theta,N};\mu,\nu_{1})-R(\mathcal{T};\mu,\nu)\right]
\le
\mathrm{Stat}_N(\mathcal H)+\mathrm{App}(\mathcal H;\mathcal{T})+\mathbb E[\varepsilon_{\mathrm{opt},N}]+W_2(\nu,\nu_{1}).
\]
\end{lemma}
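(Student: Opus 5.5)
The first inequality is a single application of the triangle inequality for $W_2$ on $\mathcal P_2$. By definition \eqref{eq:R}, $R(\mathcal{T}_{\theta,N};\mu,\nu_1)=W_2\bigl((\mathcal{T}_{\theta,N})_\sharp\mu,\nu_1\bigr)$. Inserting the intermediate measure $\nu$ gives
\[
W_2\bigl((\mathcal{T}_{\theta,N})_\sharp\mu,\nu_1\bigr)\le W_2\bigl((\mathcal{T}_{\theta,N})_\sharp\mu,\nu\bigr)+W_2(\nu,\nu_1)=R(\mathcal{T}_{\theta,N};\mu,\nu)+W_2(\nu,\nu_1).
\]
Subtracting $R(\mathcal{T};\mu,\nu)$ from both sides yields the pathwise estimate. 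This holds for every realization of the samples, since no property of $\mathcal{T}_{\theta,N}$ is used beyond measurability.

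We must check that all quantities are finite, so that the subtraction is legitimate. Each $\mathcal{T}_{\theta,N}\in\mathcal H$ is Lipschitz and $X$ is bounded, so $(\mathcal{T}_{\theta,N})_\sharp\mu$ has bounded support and lies in $\mathcal P_2$. We assume $\nu_1\in\mathcal P_2$, which is implicit in the statement because $W_2(\nu,\nu_1)$ appears. In our setting $R(\mathcal{T};\mu,\nu)=0$, since $\mathcal{T}_\sharp\mu=\nu$, but this fact is not needed.

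For the expectation bound, we take expectations of the pathwise inequality. The term $W_2(\nu,\nu_1)$ is deterministic, because both measures are population laws, so its expectation is itself. The remaining term $\mathbb E[R(\mathcal{T}_{\theta,N};\mu,\nu)-R(\mathcal{T};\mu,\nu)]$ is bounded by $\mathrm{Stat}_N(\mathcal H)+\mathrm{App}(\mathcal H;\mathcal{T})+\mathbb E[\varepsilon_{\mathrm{opt},N}]$ directly from \cref{thm:main-excess-risk}. The hypotheses of that theorem concern only $\mu$, $\nu$, $\mathcal H$, and the training output, none of which is affected by the shift to $\nu_1$.

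There is no substantial obstacle here. The only point needing care is integrability, so that linearity of expectation applies. Integrability follows from the assumption $\varepsilon_{\mathrm{opt},N}\in L^1$ and from the boundedness of the supports. Together these make every $W_2$ term bounded, with bounds depending on $B$ and $L_{\mathcal H}$.
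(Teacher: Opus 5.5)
Your proof is correct and is essentially the paper's argument: the paper bounds $R(\mathcal{T}_{\theta,N};\mu,\nu_1)-R(\mathcal{T}_{\theta,N};\mu,\nu)\le W_2(\nu,\nu_1)$ by the (reverse) triangle inequality for $W_2$ and then invokes \cref{thm:main-excess-risk}, which is the same as your insertion of the intermediate measure $\nu$ followed by taking expectations. One side remark is inaccurate but harmless: a uniform Lipschitz bound does not control where $(\mathcal{T}_{\theta,N})_\sharp\mu$ is located, and $B$ does not control $\nu_1$, so the $W_2$ terms are not bounded in terms of $B$ and $L_{\mathcal H}$ alone; this is not needed, because the quantities are nonnegative and \cref{thm:main-excess-risk} already makes $\mathbb E[R(\mathcal{T}_{\theta,N};\mu,\nu)]$ finite.
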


\begin{proof}
Decompose
\[
R(\mathcal{T}_{\theta,N};\mu,\nu_{1})-R(\mathcal{T};\mu,\nu)
=
\bigl(R(\mathcal{T}_{\theta,N};\mu,\nu_{1})-R(\mathcal{T}_{\theta,N};\mu,\nu)\bigr)
+
\bigl(R(\mathcal{T}_{\theta,N};\mu,\nu)-R(\mathcal{T};\mu,\nu)\bigr).
\]
For the first term, the reverse triangle inequality for $W_2$ gives
\[
W_2((\mathcal{T}_{\theta,N})_\sharp\mu,\nu_{1})
-
W_2((\mathcal{T}_{\theta,N})_\sharp\mu,\nu)
\le
W_2(\nu,\nu_{1}),
\]
that is,
\[
R(\mathcal{T}_{\theta,N};\mu,\nu_{1})-R(\mathcal{T}_{\theta,N};\mu,\nu)
\le
W_2(\nu,\nu_{1}).
\]
Combining the two estimates and invoking \cref{thm:main-excess-risk} proves the result.
\end{proof}

\section{Numerical Results}\label{sec:numerical}

The purpose of this section is to provide simple numerical consistency checks for the preceding excess-risk analysis. We first present explicit one and two dimensional PDE-induced target measures for which the optimal transport maps can be computed in closed form, and then report numerical experiments verifying the predicted sample-size dependence.

\subsection{One- and two- dimensional problems}

\begin{example}[One-dimensional model problem]\label{ex:1d-model}
Let \(\Omega=[0,1]\) and consider
\begin{equation}\label{eq:1d-pde-positive}
u''(x)=0 \quad \text{in }(0,1),
\qquad
u(0)=\tfrac12,\quad u(1)=\tfrac32.
\end{equation}
\end{example}

Let \(\mu\) denote the uniform probability measure on \([0,1]\), and set \(\nu=u(x)\,dx\). Then \(u(x)=x+\tfrac12\), so \(\nu\) is a probability measure on \([0,1]\). 
\begin{corollary}
For the problem in \cref{ex:1d-model}, the quadratic-cost optimal transport map \(\mathcal{T}:[0,1]\to[0,1]\) from \(\mu\) to \(\nu\) is $\mathcal{T}(x)=\frac{-1+\sqrt{1+8x}}{2}$
and \(\mathcal{T}\in C^{0,1}([0,1])\). 
Let \(\hat\mu_N\) and \(\hat\nu_N\) are the corresponding empirical measures based on \(N\) i.i.d. samples, then
$\mathbb E\!\left[W_2(\mu,\hat\mu_N)\right]\le \frac{1}{\sqrt{3(N+1)}}$, $\mathbb E\!\left[W_2(\nu,\hat\nu_N)\right]\le \sqrt{\frac{20-9\ln 3}{32(N+1)}}.$ 
Consequently, for every \(\eta>0\), one may choose a hypothesis class \(\mathcal H_\eta\) with width and depth independent of \(N\) so that \(\mathrm{App}(\mathcal H_\eta;\mathcal{T})\le \eta\); if, in addition, \(\mathbb E[\varepsilon_{\mathrm{opt},N}]\le \eta\), then there exists \(C_\eta>0\) such that
\[
\mathbb E\!\left[R(\mathcal{T}_{\theta,N};\mu,\nu)-R(\mathcal{T};\mu,\nu)\right]
\le C_\eta N^{-1/2}+2\eta.
\]
\end{corollary}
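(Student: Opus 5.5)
The plan is to treat the three assertions in order: the explicit map, the two one-dimensional empirical bounds, and the final rate. The rate will follow from \cref{thm:main-excess-risk} exactly as in the proof of \cref{cor:fixed-arch-rate}, with the general Fournier--Guillin rates replaced by sharper one-dimensional bounds.

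\textbf{The map.} In one dimension, with quadratic cost, the optimal map is the monotone rearrangement $\mathcal T=F_\nu^{-1}\circ F_\mu$. Here $F_\mu(x)=x$, and
\[
F_\nu(y)=\int_0^y\bigl(s+\tfrac12\bigr)\,ds=\tfrac{y^2+y}{2}.
\]
Solving $\tfrac{y^2+y}{2}=x$ for the root in $[0,1]$ gives $\mathcal T(x)=\frac{-1+\sqrt{1+8x}}{2}$. This map is increasing with $\mathcal T(0)=0$ and $\mathcal T(1)=1$. Its derivative is $\mathcal T'(x)=2/\sqrt{1+8x}\in[2/3,2]$, so $\mathcal T\in C^{0,1}([0,1])$. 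This is consistent with \cref{thm:main1}, since $u\ge \tfrac12>0$ and \cref{thm:bounded} applies.

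\textbf{Empirical bounds.} I would use the one-dimensional quantile representation
\[
W_2^2(\rho,\hat\rho_N)=\int_0^1\bigl|F_\rho^{-1}(t)-F_{\hat\rho_N}^{-1}(t)\bigr|^2\,dt,
\]
together with the Bobkov--Ledoux estimate
\[
\mathbb E\,W_2^2(\rho,\hat\rho_N)\le \frac{2}{N+1}\int \frac{F_\rho(1-F_\rho)}{f_\rho}\,dy .
\]
This estimate can be rederived directly: the $k$-th order statistic of $N$ uniforms has a Beta law with variance $\frac{t(1-t)}{N+2}$-type behaviour, and the quantile function is Lipschitz with constant $1/f$.
\begin{itemize}
\item For $\mu$ one has $f\equiv1$ and $\int_0^1 x(1-x)\,dx=\tfrac16$, so $\mathbb E\,W_2^2\le\frac{1}{3(N+1)}$.
\item For $\nu$ one has $F(1-F)/f=\frac{(y^2+y)(2-y^2-y)}{4(y+\frac12)}$. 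Polynomial division by $y+\tfrac12$ leaves a remainder that produces the $\ln 3$ term through $\int_0^1\frac{dy}{y+1/2}=\ln 3$. I expect the result to be $\frac{2}{N+1}\cdot\frac{20-9\ln3}{64}$, which gives the claimed constant $\frac{20-9\ln 3}{32(N+1)}$.
\end{itemize}
Jensen's inequality $\mathbb E\,W_2\le(\mathbb E\,W_2^2)^{1/2}$ then yields both displayed bounds. The main obstacle is pinning down the exact constant in the quantile variance inequality, i.e. the factor $2/(N+1)$, so that the constants match. I would verify it on the uniform case first, where the order-statistic variances $\frac{k(N+1-k)}{(N+1)^2(N+2)}$ are explicit.

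\textbf{Rate.} By \cref{prop:approximation-rate} with $\alpha=1$, choose $W,L$ independent of $N$ so that $\mathrm{App}(\mathcal H_\eta;\mathcal T)\le\eta$. \cref{thm:main-excess-risk} then gives the bound
\[
\mathbb E\bigl[R(\mathcal T_{\theta,N};\mu,\nu)-R(\mathcal T;\mu,\nu)\bigr]\le L_{\mathcal H_\eta}\frac{1}{\sqrt{3(N+1)}}+3\sqrt{\frac{20-9\ln3}{32(N+1)}}+2\eta .
\]
Since $(N+1)^{-1/2}\le N^{-1/2}$, this is at most $C_\eta N^{-1/2}+2\eta$, with $C_\eta$ depending on $L_{\mathcal H_\eta}$.
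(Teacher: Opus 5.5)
Your proposal is correct and follows the paper's proof step for step. Both use the monotone rearrangement $F_\nu^{-1}\circ F_\mu$ with $\mathcal T'\le 2$, the Bobkov--Ledoux bound $\mathbb E\,W_2^2\le 2J_2/(N+1)$ with $J_2(\mu)=\tfrac16$ and $J_2(\nu)=\tfrac{20-9\ln 3}{64}$ (your predicted value is right; the substitution $s=y+\tfrac12$ gives it directly), Jensen's inequality, and the excess-risk bound $\mathrm{Stat}_N+2\eta$. One caveat: your sketched rederivation through the global Lipschitz constant $1/f\le 2$ of $F_\nu^{-1}$ would only give $\mathbb E\,W_2^2(\nu,\hat\nu_N)\le 2/(3N)$, so the stated constant really does require citing the Bobkov--Ledoux theorem itself, as the paper does.
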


\begin{proof}
The differential equation implies that \(u\) is affine, and the boundary conditions give \(u(x)=x+\tfrac12\). In particular, \(u>0\) on \([0,1]\) and \(\int_0^1 u(x)\,dx=1\), so \(\nu\) is a probability measure. Since this is a special case of the elliptic setting of \cref{thm:main1} with \(g\equiv0\) and strictly positive boundary data, the associated optimal transport map is H\"older continuous. In one dimension the map can be computed explicitly: the distribution functions are \(F_\mu(x)=x\) and \(F_\nu(y)=\tfrac12(y^2+y)\), hence \(\mathcal{T}=F_\nu^{-1}\circ F_\mu\), which gives the stated formula. Moreover, \(\mathcal{T}'(x)=2(1+8x)^{-1/2}\le 2\), so \(\mathcal{T}\in C^{0,1}([0,1])\).

For the empirical Wasserstein terms we use the sharper bound in \cite{bobkov2019empirical}, which states that \(\mathbb E[W_2^2(\lambda,\hat\lambda_N)]\le 2J_2(\lambda)/(N+1)\), where \(J_2(\lambda)=\int \frac{F_\lambda(x)(1-F_\lambda(x))}{f_\lambda(x)}\,dx\) whenever \(\lambda\) has density \(f_\lambda\) and distribution function \(F_\lambda\). Here \(J_2(\mu)=\int_0^1 x(1-x)\,dx=\tfrac16\), while for \(\nu\), since \(f_\nu(x)=x+\tfrac12\) and \(F_\nu(x)=\tfrac12(x^2+x)\), one computes $
J_2(\nu)=\int_0^1 \frac{F_\nu(x)(1-F_\nu(x))}{f_\nu(x)}\,dx=\frac{20-9\ln 3}{64}.$
Jensen's inequality then yields the stated bounds. The final estimate follows from \cref{cor:fixed-arch-rate}.
\end{proof}

\begin{example}[Two-dimensional model problem]\label{ex:2d-model}
Let \(\Omega=B_1(0)=\{x\in\mathbb R^2:\ |x|\le 1\}\) and consider
\begin{equation}\label{eq:2d-pde-positive}
-\Delta u = 1 \quad \text{in } B_1(0),
\qquad
u=0 \quad \text{on } \partial B_1(0).
\end{equation}
Let \(\mu\) denote the uniform probability measure on \(B_1(0)\),$\mu=\frac1\pi \mathbf{1}_{B_1(0)}(x)\,dx,$ and let \(\nu\) be the probability measure obtained by normalizing \(u\). Then \(u(x)=\frac{1-|x|^2}{4}\), and therefore $\nu=\frac{2}{\pi}(1-|x|^2)\mathbf{1}_{B_1(0)}(x)\,dx.$ 
\end{example}
\begin{corollary}
For the problem in \cref{ex:1d-model}, the quadratic-cost optimal transport map \(\mathcal{T}:B_1(0)\to B_1(0)\) from \(\mu\) to \(\nu\) is H\"older continuous since this is a special case of \cref{thm:main1}. 
Consequently, for every \(\eta>0\), one may choose a hypothesis class \(\mathcal H_\eta\) with width and depth independent of \(N\) so that \(\mathrm{App}(\mathcal H_\eta;\mathcal{T})\le \eta\); if, in addition, \(\mathbb E[\varepsilon_{\mathrm{opt},N}]\le \eta\), then by \cref{cor:fixed-arch-rate}, there exists \(C_\eta>0\) such that
\[
\mathbb E\!\left[R(\mathcal{T}_{\theta,N};\mu,\nu)-R(\mathcal{T};\mu,\nu)\right]
\le C_\eta N^{-1/4}+2\eta.
\]
\end{corollary}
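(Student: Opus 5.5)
The corollary (which, despite the cross-reference to \cref{ex:1d-model}, concerns the two-dimensional problem of \cref{ex:2d-model}) follows the same three-stage pattern as the one-dimensional corollary: verify the hypotheses of \cref{thm:main1}, read off regularity of $\mathcal{T}$, and then apply \cref{cor:fixed-arch-rate} with $d=2$.

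\textbf{Identifying the measures.} First I verify the explicit formulas. Since $\Delta |x|^2 = 2d = 4$ in $\mathbb{R}^2$, the function $u(x)=(1-|x|^2)/4$ satisfies $-\Delta u = 1$ and vanishes on $\partial B_1(0)$. Its mass is
\[
\int_{B_1(0)} u\,dx = 2\pi\int_0^1 \frac{1-r^2}{4}\,r\,dr = \frac{\pi}{8},
\]
so the normalized density is $\frac{2}{\pi}(1-|x|^2)$, as stated.

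\textbf{Checking the hypotheses of \cref{thm:main1}.} This is an instance of \eqref{eq:elliptic_pde} with $a^{ij}=\delta_{ij}$, $b=0$, $c=0$, $g\equiv 1\ge 0$ and $h\equiv 0$, so \cref{ass:elliptic_data} holds in the vanishing-boundary case. The domain $Y=B_1(0)$ is a bounded, convex, smooth ($C^{2,\alpha}$) domain, and $X=B_1(0)$ is bounded and convex, so $\mu$ is doubling by \cref{thm:bounded}. For the target, \cref{thm:elliptic_doubling} applies directly; concretely, $u(x)=\tfrac{1}{4}(1-|x|)(1+|x|)$, so with $d(x)=1-|x|$ we have $\tfrac14 d\le u\le \tfrac12 d$. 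This is exactly the two-sided comparison with the distance function used in that proof. Hence $\nu$ is doubling. Since both domains are open, bounded and convex, \cref{lemma:caffarelli} yields $\mathcal{T}\in C^\beta(\overline{B_1(0)})$ for some $\beta\in(0,1]$, so $\mathcal{T}$ is regular.

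One could alternatively compute $\mathcal{T}$ explicitly by radial symmetry, $\mathcal{T}(x)=\rho(|x|)\,x/|x|$ with $\rho$ determined by matching radial CDFs: $r^2 = 2\rho^2-\rho^4$. However, H\"older regularity is all that is needed here.

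\textbf{Rate.} Both measures lie in $[-1,1]^2$, so \cref{thm:main-excess-risk} applies with $B=1$. \cref{cor:fixed-arch-rate} then gives the following. For any $\eta>0$ there is a fixed architecture $\mathcal H_\eta$ with $\mathrm{App}(\mathcal H_\eta;\mathcal{T})\le\eta$, and, assuming $\mathbb E[\varepsilon_{\mathrm{opt},N}]\le \eta$,
\[
\mathbb E\bigl[R(\mathcal{T}_{\theta,N};\mu,\nu)-R(\mathcal{T};\mu,\nu)\bigr]\le 2\eta + \mathrm{Stat}_N(\mathcal H_\eta).
\]
The statistical term is bounded by the Fournier--Guillin estimates in the case $d=2<4$, which give $\mathbb E[W_2(\cdot,\hat\cdot_N)]\lesssim N^{-1/4}$ for both $\mu$ and $\nu$. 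This yields the claimed bound $C_\eta N^{-1/4}+2\eta$.

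The only step requiring care is confirming that the degenerate density $\frac{2}{\pi}(1-|x|^2)$ is still doubling near the boundary, and the explicit comparison with $d(x)$ above settles this without further work.
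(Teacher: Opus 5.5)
Your proposal is correct and follows the paper's own route: the paper treats this corollary as the vanishing-boundary elliptic case of \cref{thm:main1}, with doubling from \cref{thm:elliptic_doubling} and regularity from \cref{lemma:caffarelli}, followed by \cref{cor:fixed-arch-rate} with $d=2<4$, and you correctly note that the cross-reference should point to \cref{ex:2d-model}. Your explicit bound $\tfrac14 d(x)\le u(x)\le \tfrac12 d(x)$ is a concrete instance of the distance-function comparison used in the proof of \cref{thm:elliptic_doubling}, so no further argument is needed.
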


\subsection{Numerical verification of the predicted rate}

We now give numerical consistency checks for \cref{ex:1d-model,ex:2d-model}, together with \cref{cor:fixed-arch-rate}. Throughout all experiments the architecture is kept fixed, in accordance with the theoretical discussion: the learned map \(\mathcal{T}_{\theta,N}\) is parameterized by a multilayer perceptron whose hypothesis class does not vary with the sample size \(N\). Thus only the empirical measures \(\hat\mu_N\) and \(\hat\nu_N\) change as \(N\) increases.

\begin{figure}[t]
\centering
\begin{minipage}[t]{0.48\textwidth}
\centering
\includegraphics[width=\linewidth]{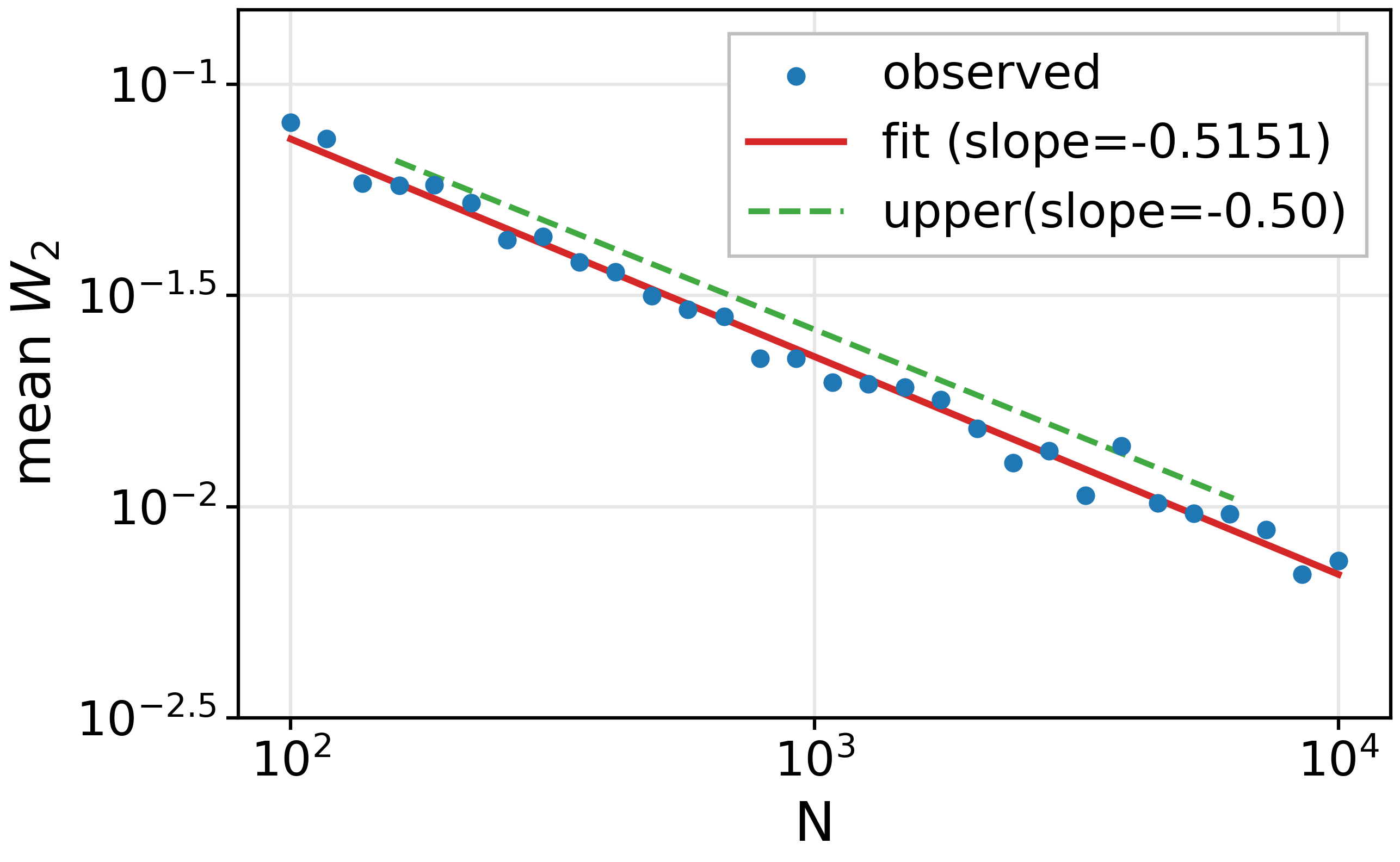}

{\small (a) One-dimensional experiment.}
\end{minipage}\hfill
\begin{minipage}[t]{0.48\textwidth}
\centering
\includegraphics[width=\linewidth]{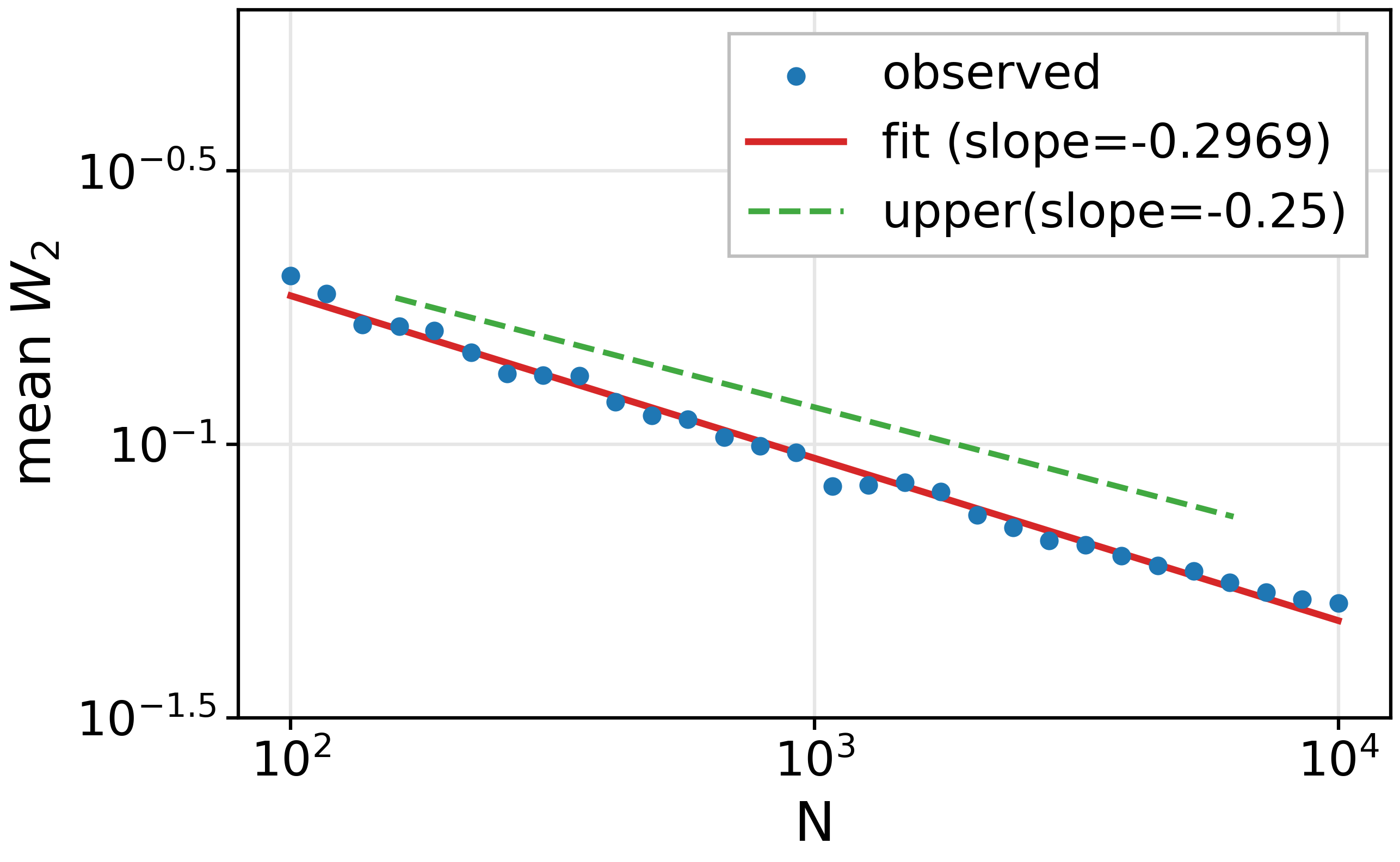}

{\small (b) Two-dimensional experiment.}
\end{minipage}
\caption{Validation Wasserstein--2 error versus sample size for the one- and two-dimensional model problems. In each panel, the blue curve shows the observed mean validation error and the red line is the least-squares fit in log-log coordinates.}
\label{fig:numerical-rates}
\end{figure}

\paragraph{One-dimensional experiment}
We use the source and target measures from \cref{ex:1d-model} and the exact optimal transport map given there. The network architecture is \([1,256,256,1]\). We consider \(30\) logarithmically spaced sample sizes \(N\in[10^2,10^4]\), and for each \(N\) we train \(30\) independent models. Optimization is performed with Adam, initial learning rate \(10^{-2}\), and a StepLR scheduler with step size \(500\) and decay factor \(0.9\). Each run is trained for at most \(10^5\) iterations with mini-batch size \(\lfloor N/2\rfloor\), and early stopping with patience \(5000\) is employed. Source samples are drawn from \(\mu\), while target samples are generated from \(\nu\) by inverse-transform sampling.

For evaluation, we use an independent validation set of size \(10^6\). For each run we retain the iterate with smallest training loss and compute the one-dimensional Wasserstein--2 distance between the pushforward of the validation source sample and an independent validation sample from \(\nu\). In one dimension this quantity is computed exactly by sorting the two sample sets and taking the root mean square of the paired differences. Panel (a) of \cref{fig:numerical-rates} plots the mean validation \(W_2\) over the \(30\) runs as a function of \(N\). A least-squares fit in log-log coordinates gives $\log_{10}(W_2)\approx -0.5151\,\log_{10}(N)-0.0992.$ The fitted slope is close to \(-\tfrac12\), which is consistent with the \(N^{-1/2}\) scaling predicted by \cref{ex:1d-model}.

\paragraph{Two-dimensional experiment}
We use the source and target measures from \cref{ex:2d-model}. We again keep the architecture fixed and use a multilayer perceptron of architecture \([2,256,256,2]\). The other settings are the same as the one-dimensional problem. Source samples are drawn uniformly from \(B_1(0)\), while target samples are generated from \(\nu\).

For evaluation, we use an independent validation set of size \(3\times 10^4\). For each run we retain the iterate with smallest training loss and compute the Wasserstein--2 distance between the pushforward of the validation source sample and an independent validation sample from \(\nu\). Since no exact sorting formula is available in two dimensions, this validation \(W_2\) is computed numerically from the two empirical measures on the validation point clouds. Panel (b) of \cref{fig:numerical-rates} plots the mean validation \(W_2\) over the \(30\) runs as a function of \(N\). A least-squares fit in log-log coordinates gives $\log_{10}(W_2)\approx -0.2969\,\log_{10}(N)-0.1343.$
Thus the observed decay is slightly faster than the rate \(N^{-1/4}\) guaranteed by \cref{cor:fixed-arch-rate}, and is therefore consistent with the theory. This also indicates that, for the present two-dimensional example, the theoretical upper bound is not sharp.


\section{Conclusion}\label{sec:conclusion}

In this paper, we developed a theoretical framework for one-step
Wasserstein-guided generative modeling of PDE-induced probability measures. The
central theoretical result is \cref{thm:main1}, which establishes the
H\"older continuity of the population-optimal transport map for target measures
arising from several classes of PDE models. This result provides a regularity
foundation for approximating PDE-induced probability measures through a single
transport map from a reference distribution. Its proof relies on two main
ingredients: the doubling property for target measures generated by
bounded-domain elliptic and parabolic equations, and by Fokker--Planck dynamics
on the torus, as proved in
\cref{thm:elliptic_doubling,thm:parabolic_doubling,thm:torus_doubling}; and the
optimal-transport regularity result in \cref{lemma:caffarelli}, which together
yield the desired H\"older regularity.

We then analyzed DeepParticle as a representative one-step
Wasserstein-guided model. The excess-risk bound is given in
\cref{thm:main-excess-risk}; for fixed sufficiently expressive architectures
and sufficiently small optimization error, the corresponding sample-size rates
are summarized in \cref{cor:fixed-arch-rate}. We also obtained an
out-of-distribution risk bound in \cref{prop:ood-bound}, showing how the learned
transport map behaves when evaluated beyond the training distribution. The
numerical examples in \cref{ex:1d-model,ex:2d-model} and the experiments in
\cref{fig:numerical-rates} further support the predicted convergence behavior
and illustrate the practical relevance of the theoretical rates.

Overall, these results connect PDE regularity, optimal transport theory, and
statistical learning guarantees for one-step generative modeling. Future work
includes sharpening the dimension dependence in the statistical terms,
extending the theory to nonlinear or unbounded-domain PDE settings, and
developing optimization guarantees that better reflect practical training
dynamics.

\section*{Acknowledgements}
 ZZ was supported by the National Natural Science Foundation of China (Projects 92470103), the Hong Kong RGC grant (Projects 17304324 and 17300325), the Seed Funding Programme for Basic Research (HKU), and the Hong Kong RGC Research Fellow Scheme 2025. ZW was partly supported by NTU SUG-023162-00001, MOE AcRF Tier 1 Grant RG17/24. JX was partly supported by NSF grants DMS-2219904 and DMS-2309520. The computations were performed at the research computing facilities provided by the Information Technology Services, the University of Hong Kong, and the Greenplanet Cluster at UC Irvine.
\bibliographystyle{plain}
\bibliography{sample}

\end{document}